\def\endthebibliography{%
  \def\@noitemerr{\@latex@warning{Empty `thebibliography' environment}}%
  \endlist
}
\newcommand{\com}[1]{{\color{black}#1 }}
\newcommand{\com}[1]{}
\newtheorem{theorem}{\textbf{Theorem}}
\newtheorem*{proof}{Proof}
\newtheorem{remark}{Remark}
\newtheorem{corollary}{\textbf{Corollary}}
\begin{document}
\title{From Learning to Analytics: Improving Model Efficacy with Goal-Directed Client Selection}
\author{\IEEEauthorblockN{Jingwen Tong, ~\IEEEmembership{Member,~IEEE},
Zhenzhen Chen,
Liqun Fu, ~\IEEEmembership{Senior Member,~IEEE},
Jun Zhang,  ~\IEEEmembership{Fellow,~IEEE},
 and
Zhu Han, ~\IEEEmembership{Fellow,~IEEE}}
\thanks{\emph{Corresponding author: Liqun Fu.}
This work was partly presented at IEEE ICC 2022 \cite{chen2022over}.
J. Tong and J. Zhang are with the Department of Electronic and Computer Engineering, The Hong Kong University of Science and Technology, Hong Kong, China (e-mails: eejwentong@ust.hk; eejzhang@ust.hk).
Z. Chen and L. Fu  are with the School of Informatics, Xiamen University, Xiamen 361005, China
(e-mails: zzchen@163.com; liqun@xmu.edu.cn).
 Z. Han is with the Department of Electrical and Computer Engineering at the University of Houston, Houston, TX 77004 USA, and also with the Department of Computer Science and Engineering, Kyung Hee University, Seoul 446-701, South Korea  (e-mail: zhan2@uh.edu).
}
}	

\IEEEtitleabstractindextext{
\begin{abstract}
Federated learning (FL) is an appealing paradigm for learning a global model among distributed clients while preserving data privacy.
Driven by the demand for high-quality user experiences, evaluating the well-trained global model after the FL process is crucial.
In this paper, we propose a closed-loop model analytics framework that allows for effective evaluation of the trained global model using clients' local data.
To address the challenges posed by system and data heterogeneities in the FL process,
we study a \textit{goal-directed} client selection problem based on the model analytics framework by selecting a subset of clients for the model training.
This problem is formulated as a stochastic multi-armed bandit (SMAB) problem.
We first put forth a quick initial upper confidence bound (Quick-Init UCB) algorithm to solve this SMAB problem under the federated analytics (FA) framework.
Then, we further propose a belief propagation-based UCB (BP-UCB) algorithm under the democratized analytics (DA) framework.
Moreover, we derive two regret upper bounds for the proposed algorithms, which increase logarithmically over the time horizon.
The numerical results demonstrate that the proposed algorithms achieve nearly optimal performance, with a gap of less than $1.44\%$ and $3.12\%$ under the FA and DA frameworks, respectively.
\end{abstract}

\begin{IEEEkeywords}
Federated learning (FL), federated analytics (FA), democratized analytics (DA), client selection, belief propagation, upper confidence bound (UCB).
\end{IEEEkeywords}}

\maketitle

\IEEEdisplaynontitleabstractindextext

\IEEEpeerreviewmaketitle

\if CLASSOPTIONcompsoc
\IEEEraisesectionheading{\section{Introduction}\label{sec:introduction}}
\else
\section{Introduction}
\label{sec:introduction}
\fi
With the exponential advancement in storage capability and computing power, coupled with the harsh requirement of data security and privacy, it is desirable to process the high-volume data locally by clients rather than sending them to a central server \cite{abbas2017mobile}.
This emerging paradigm has given rise to the concept of federated learning (FL).
FL enables the collaborative learning of a global model across distributed local clients without the need for centralized data collection while preserving the privacy of clients' data \cite{ li2021blockchain}.
In recent years, FL has garnered significant attention in various communication networks, including but not limited to mobile edge computing networks, Internet-of-Things networks, and vehicle networks.

In the above applications, however, the efficiency of the training process in the FL is not the ultimate goal.
Instead, driven by the demand for high-quality user experiences, optimizing the efficacy of the trained global model by iteratively fine-tuning the training process is essential.
For instance,  Google's Gboard \cite{ramage2020federated} employs the FL to train a deep learning model for next-word prediction.
To improve the prediction accuracy, they evaluate the trained global model periodically by adjusting the training process based on the clients' testing results, i.e., the accuracy of the prediction model.
Therefore, a model analytics framework is essential for evaluating the trained global model.

In this paper, we investigate two types of model analytics frameworks: federated analytics (FA) \cite{ramage2020federated} and democratized analytics (DA) \cite{nguyen2021distributed}.
The FA framework represents a novel distributed computing paradigm for data analytics applications with privacy concerns.
It evaluates the trained global model using clients' local datasets without data collection in the central server.
The evaluation results, named test opinions, are then uploaded to the central server for model adjustment.
In contrast, the DA framework operates on a decentralized model, where clients self-organize or form well-connected networks to exchange their evaluation results.
To accommodate this decentralized feature, we will employ the belief propagation (BP) algorithm to facilitate message exchange between clients.

It is important to note that both the FA and DA frameworks connect the trained global model and the original dataset. In this way, the evaluation results obtained through these frameworks can effectively guide the FL training process in subsequent rounds. This capability enables the development of two additional frameworks: FL\&FA and FL\&DA. These frameworks incorporate a closed-loop mechanism to
iteratively adjust the FL training process.
Based on the proposed frameworks,
we study a \textit{goal-directed} client selection problem by selecting a subset of clients for the model training in FL.
The \textit{goal-directed} client selection mechanism not only can directly optimize the efficacy of the trained global model using the original dataset but also address the challenges of system and data heterogeneity in the FL training process.

The objective of the client selection problem is to find an optimal subset of clients for model training,
with the aim of maximizing the clients' average opinions.
This problem can be formulated as an online learning problem,
where the central server need to balance the \textit{exploitation} and \textit{exploration} (EE) dilemma to achieve optimal results.
On the one hand, the central server needs to exploit the current best client set as many times as possible to maximize its total rewards;
On the other hand, it also requires exploring other client sets as often as possible to avoid missing the optimal solution.
To balance this dilemma, we further model this online learning problem as a stochastic multi-armed bandit (SMAB) problem \cite{bubeck2012regret}.
In the FL\&FA framework, the player and the arms in the SMAB problem are the central server and the subsets of clients, respectively;
while the player and the arms are both clients in the FL\&DA framework.
Then, we propose two upper confidence bound (UCB)-based algorithms \cite{auer2002finite}  to solve this SMAB problem under the FL\&FA and  FL\&DA frameworks.

\subsection{Related Works}
FA was proposed by Google in 2020 \cite{ramage2020federated}  for data analytics applications, which can be considered into a generalized perspective and a specialized perspective \cite{wang2021federated}.
From the generalized perspective, FA contains all computing tasks that draw opinions from data and thus includes the FL.
From the specialized perspective, FA refers to the model inference phase of machine learning and thus is the sequel of the FL.
Wang \textit{et al.} \cite{wang2022secure} studied the secure trajectory publication problem in untrusted environments using the FA approach from the generalized perspective.
Similarly, Chen \textit{et al.} \cite{chen2021digital}  investigated a digital twin-assisted federated distribution discovery problem using the FA approach.
In practice, clients may prefer to share information with their neighbors rather than upload it to the central server due to privacy concerns.
This results in the DA framework, where clients are self-organized or well-connected.
Pandey \textit{et al.} \cite{pandey2021edge} proposed an edge-assisted democratized learning
and analytics framework to improve the generalization capability of the trained global model.
In this paper, we consider the FA and DA frameworks from the specialized perspective, i.e., to evaluate the trained global model after the FL.

Client selection problem for FL has been studied in \cite{nishio2019client, yang2019scheduling,  perazzone2022communication}  by considering the system heterogeneity, which is also known as the straggler problem \cite{chai2020tifl}.
Nishio and Yonetani \cite{nishio2019client} studied the client selection problem in a mobile edge computing system by choosing a subset of clients with better computation capabilities and channel conditions for model training.
Meanwhile, several scheduling methods, such as random selection, proportional fair selection, and round-robbin selection, have been investigated in \cite{yang2019scheduling} to conquer the limited communication resource as the high dimensionality of model parameters and a larger number of clients.
By considering the time-varying and deep-fading channel models, ref. \cite{ perazzone2022communication} proposed to select a subset of clients with a better channel condition to perform the FL.

However, several works investigate the client selection problem by considering the data heterogeneity, usually referring to the data size and data quality heterogeneities.
Wang \textit{et al.} \cite{wang2020optimizing}  studied the client selection problem with non-independent and identically distributed (non-i.i.d.) client data distributions and proposed two selection algorithms to mitigate the straggler effect in the FL.
Cho \textit{et al.} \cite{cho2020client} considered an unbiased selection method to address this problem by selecting the clients with more training loss. This indicates that clients with higher local training loss should be selected to achieve faster convergence.
\com{Deng \textit{et al.}  \cite{deng2022blockchain} proposed a client scheduling algorithm in the blockchain-assisted FL over wireless channel by considering the data heterogeneity.}
In addition, AbdulRahman \textit{et al.} \cite{abdulrahman2020fedmccs} studied the data size-based client selection to enhance the model training efficiency.

Recently, the MAB-based client selection algorithms are widely considered in FL \cite{zhu2022online, shi2022vfedcs, huang2020efficiency, huang2022contextfl}.
\com{As one of the bandit models for sequential decision-making problems with stochastic processes \cite{bubeck2012regret},
SMAB attracts great attention in resource scheduling problems.
It is an efficient framework to balance the EE dilemma whose solutions have low computational complexity, are easy to implement, and have strict theoretical performance guarantees (i.e., the regret bound).}
Zhu \textit{et al.} \cite{zhu2022online} studied the client selection for asynchronous FL with a fairness guarantee and proposed a UCB-based algorithm where only one client is selected at each time slot.
Similarly, Huang \textit{et al.} \cite{huang2020efficiency} proposed two UCB-based client selection algorithms in the FL accounting for the data size and quality heterogeneity, respectively.
However, they must select a subset of clients rather than just one client at each time slot.
In fact, some side information can be used to guide the client selection process.
Huang \textit{et al.} \cite{huang2022contextfl} developed a contextual MAB-based client selection algorithm in mobile edge computing systems where the estimated computation and communication time is regarded as the context.

However, there are two drawbacks in the above client selection works.
First, they often assume a causal relationship between the FL training efficiency  (\textit{e.g.}, convergence rate and accuracy) and the pre-determined factors (\textit{e.g.}, channel condition, computation power, data size, and data quality).
Under this assumption, targeting one factor may result in a suboptimal solution;
while considering all factors that suffer from high system complexity since they require some global information.
In fact, the performance of the trained global model should be established on the real dataset rather than the pre-determined causal relationship \cite{li2022pyramidfl}.
In this paper, we tackle the system and data heterogeneity in the FL by considering the \textit{goal-directed} client selection problem based on the FL\&FA and FL\&DA frameworks, which directly establishes a connection between the trained model and the original dataset.
Second, the number of combinations of the selected clients is typically huge, especially when a large-scale network is considered.
This results in prohibitive computational complexity as the client selection problem is typically NP-hard \cite{shi2022vfedcs}.
To overcome these challenges, we design two computation-efficient UCB algorithms for the \textit{goal-directed} client selection problem in the FL\&FA and FL\&DA frameworks, respectively.

\subsection{Contributions}
In this paper, we study the \textit{goal-directed} client selection problem under the FL\&FA and FL\&DA  frameworks by selecting a subset of clients to maximize all clients' average opinions subject to limited communication resources.
The main contributions of this work are listed as follows.
\begin{itemize}
	\item We first propose two closed-loop model analytics frameworks (i.e., the FL\&FA and FL\&DA frameworks)  to evaluate the trained global model.  Unlike existing works that focus on optimizing the training efficiency, we propose to maximize the performance (or the goal) of the well-trained global model.
	\item Based on the proposed frameworks, we tackle the system and data heterogeneity in the FL by considering the \textit{goal-directed} client selection problem. This problem is formulated as the SMAB problem. We propose a  quick initial UCB (Quick-Init UCB) algorithm \cite{chen2016combinatorial} and a BP-UCB algorithm to solve the SMAB problem under the FL\&FA and FL\&DA frameworks, respectively.
	\item We derive two regret upper bounds for the Quick-Init UCB algorithm and the BP-UCB algorithm, respectively, based on the convergence of the BP algorithm and the regret analysis of the UCB algorithm. Theoretical results show that the proposed algorithms are asymptotically optimal when the time horizon is sufficiently large.
	\item  We conduct extensive simulations to evaluate the proposed algorithms with different configurations.
The numerical results demonstrate that the proposed algorithms are close to the optimal solution, with a marginal gap of less than $1.44\%$ and $3.12\%$, respectively. In addition, they enjoy a faster convergence rate than existing client selection algorithms.
\end{itemize}

\subsection{Organization}
The structure of this paper is organized as follows.
Section \ref{sys mod} introduces the system models.
Section \ref{problem_formulation} presents the proposed frameworks and the problem formulation.
Then, the Quick-Init UCB algorithm and its regret upper bound are given in Section \ref{OUS1} under the FL$\&$FA framework;
while the BP-UCB algorithm and its regret upper bound are given in Section \ref{OUS2} under the FL\&DA framework.
Section \ref{Simulation Results} provides extensive numerical results to evaluate the proposed algorithms.
Finally, this paper is concluded in Section \ref{Conclusion}.

\section{System Model}	\label{sys mod}
We consider a wireless communication system that consists of $N$ clients and a base station (or central server).
The $N$ clients are collaborative in learning a global model without the data collection in the central server,
i.e., only the local model parameters are required to upload to the server.
\com{Let $n=1,2,\ldots,N$ be the index of the client and $n \in \mathcal{N}$.}
Assume that time is slotted in $t = 1,2,\ldots,T$, where $T$ is the time horizon.
At each time slot $t$, there are total $L$ communication rounds\footnote{Note that length $L$ can be adjusted flexibly according to the time complexity of the training algorithm and the target accuracy.} for the FL process (i.e., $l=1,\ldots,L$).
In addition, we consider a resource-constrained scenario where only $K$ clients (i.e., $K<N$) can update their parameters to the server at each time slot.
The system's goal is to maximize the clients' average opinions of the trained global model by selecting $K$ clients in the FL process subject to the limited channel budget and the client system and data heterogeneity.

\subsection{Channel Model}
In practice, the communication channel is typically stochastic and time-varying.
Without loss of generality, we consider a block-fading channel model \cite{biglieri2001limiting} and \cite{tong2023model}, where the channel gain is constant during the $l$-th communication round but changes among two blocks (or two communication rounds) \footnote{\com{A more comprehensive study of the influence of the channel variation on FL can be found in \cite{shah2022robust} and \cite{amiri2020federated}.}}.
In addition, we assume that the duration of each communication round $l$ is fixed.
The training time for different clients may vary because of the client system heterogeneity, i.e., different clients have different computation resources.
Thus, clients with poor computation power may fail to update their model parameters to the central server due to the fixed communication duration.

Let $\theta_{n}$  be the successful update probability that client $n$ transmits to the central server through the block-fading channel.
In this work, we consider that $\theta_{n}$ models the following three events: 1) the local training time of client $n$ should be shorter than the duration of the communication round;
2) the trained model parameter $\boldsymbol{\omega}$ should be successfully transmitted to the central server over the block-fading channel;
3) \com{client $n$ does not dropout in this communication round.}
To be specific, we define a binary variable $X^l_{n}$, indicating the transmission outcome at training round $l$, where $X^l_n=1$ means that the $n$-th client successfully updates to the server at round $l$; otherwise, $X_n^l=0$.
Then, we have $\theta_{n} = \mathbb{E}[X_n^l]$, where $\mathbb{E}[\cdot]$ is the expectation operation.

\subsection{Federated Learning}
In the FL, each client keeps its data locally while the local model parameters are uploaded to the central server for model aggregation.
In the following, we introduce the FedAvg algorithm \cite{mcmahan2017communication},  which aggregates the uploaded parameters in the server using the weighted average operation.
Mathematically, the goal of FL is to solve the following stochastic optimization problem
\begin{equation}
	\min _{\boldsymbol{\omega}} {F}\left( \boldsymbol{\omega}  \right) = \frac{1}{N}\sum\limits_{n = 1}^N {{F_n}\left( \boldsymbol{\omega}\right)},
	\label{FL target}
\end{equation}
where $\boldsymbol{\omega}$ is the model parameter vector and $F_n(\cdot)$ is the loss function of client $n$.
Due to the limited communication resources, only $K$ clients will be selected to perform model training.
Denote $\mathcal{K}^t$ by the set of selected clients at time slot $t$.
In the FedAvg algorithm, one needs to compute the weighted average of $K$ selected clients' parameter vectors $\boldsymbol{\omega}^{l}_k, k\in \mathcal{K}^t$.
Here, we consider a modified version of the weighted average operation which captures the influence of the client system heterogeneity, i.e.,
\begin{equation}
{{\boldsymbol{\omega}} ^{l + 1}\left( \mathcal{K}{^t}\right)} = \frac{1}{{ \sum\limits_{k \in \mathcal{K}^t} {X_k^l \left| {{D_k}} \right|} }}\sum\limits_{k \in \mathcal{K}^t} {X_k^l \left| {{D_k}} \right|\boldsymbol{\omega} _{k}^{l }},
	\label{ave01}
\end{equation}
where $\left| {{D_k}} \right|$ is the data size of client $k$ in set $\mathcal{K}^t$.
\com{
Note that the channel variation is modeled in $X^l_{n}$.
From a long-term perspective, the weight in \eqref{ave01} is $\mathbb{E}[X_k^l |D_k|] = \theta_{k} |D_k|$, which captures the channel variation and the computation power heterogeneity.
According to  \cite{abdulrahman2020fedmccs}, the data size will influence the performance of the trained model.
As a result, the channel variation can impact the performance of the trained model by affecting the number of clients participated in the federated training process (i.e., changing the data size).}

The FedAvg algorithm consists of the local training and central aggregation parts.
In the local training part, only $K$ selected clients are required to train the local model $\boldsymbol{\omega}_k$ with their own dataset by using the stochastic gradient descent (SGD) method.
The batch size and epoch size parameters in the SGD method are $B$ and $E$, respectively.
After training, the model parameter vectors $\boldsymbol{\omega}$ are required to upload to the server for model aggregation.
In the central aggregation part, the central server receives all the model parameter vectors from the $K$ selected clients.
Then, it aggregates these parameters using the weight average method in \eqref{ave01}.
After $L$ communication rounds, a well-trained global model $\boldsymbol{\omega} (\mathcal{K}^t)$ can be obtained.
Then, the central server broadcasts the trained model to all $N$ clients with a dedicated channel.
Existing works stop at this step, ignoring the performance (or the goal) of this trained global model.
In this paper, we design two model analysis frameworks (i.e., the FL\&FA and FL\&DA frameworks) to evaluate the trained global model.
\begin{figure*}[!t]
\centering
\subfloat[The proposed FL\&FA framework.]{\includegraphics[width=2.8in]{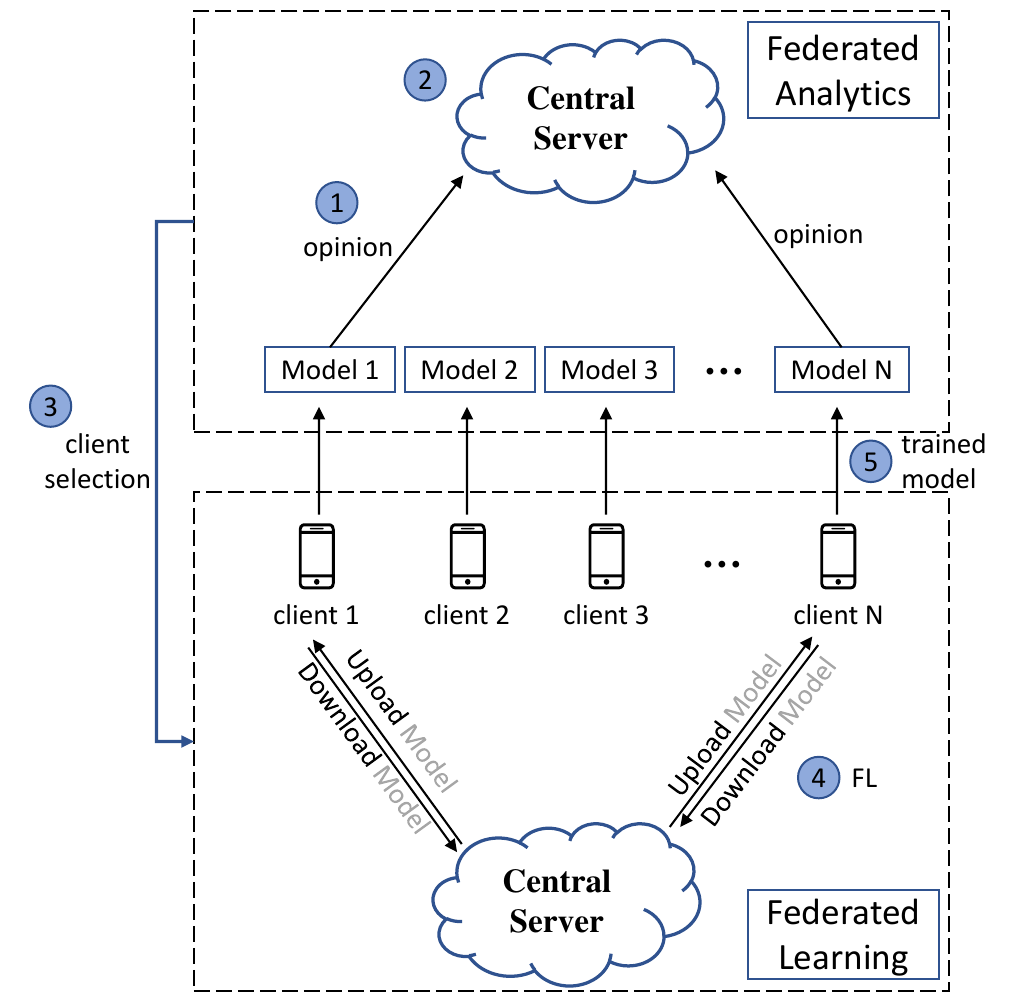} 
\label{Lgradual_case1}}
\hfil
\subfloat[The proposed FL\&DA framework.]{\includegraphics[width=2.8in]{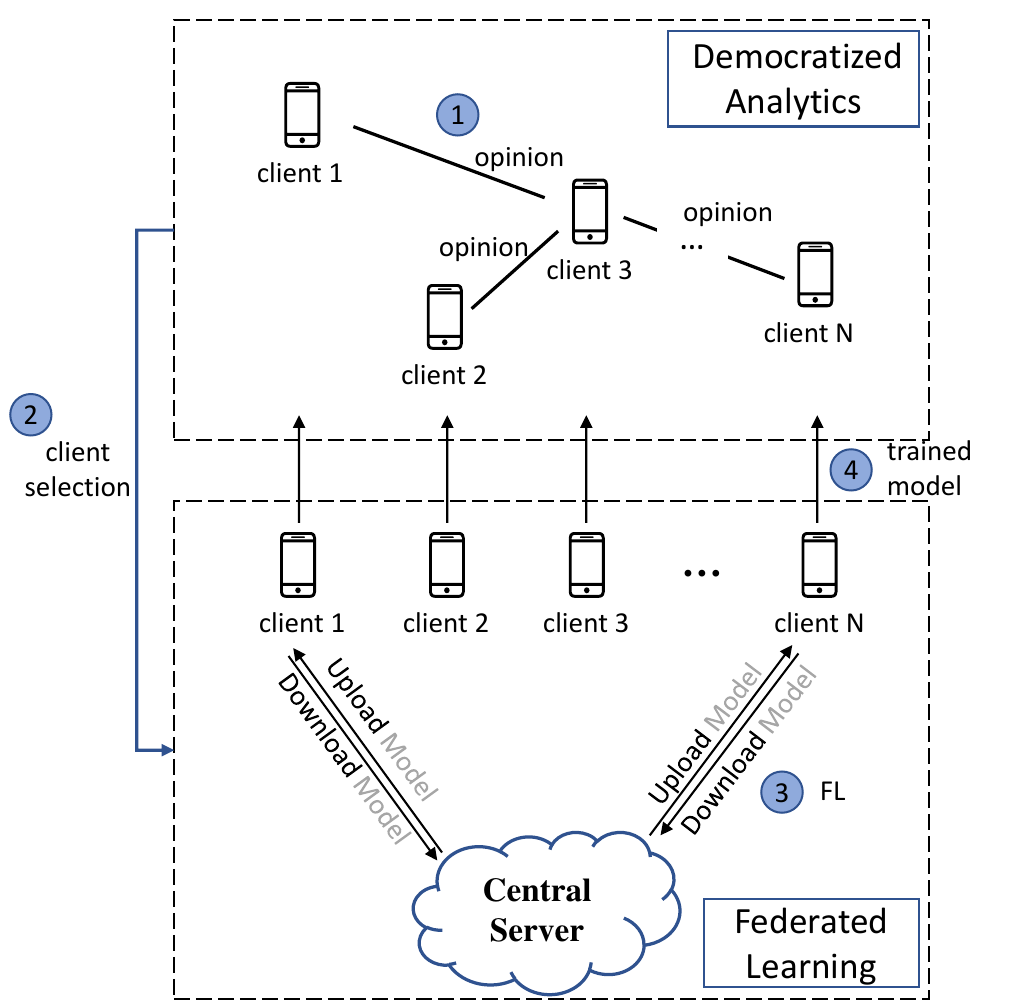}
\label{Lsteep_case2}}
\caption{(a) The structure of the FL\&FA framework; \quad (b) The structure of the FL\&DA framework.}\label{FameW}
\end{figure*}

\subsection{Model Analytics}
The key role of the model analytics part is to evaluate the trained global model after the FL process on the client side.
\com{An evaluation mechanism essentially exists in many practical applications, such as the quality of experience (QoE)-based applications.
Let $r_{t, n}(\boldsymbol{\omega}^t(\mathcal{K}^t))$ be the evaluation score (or opinion) of client $n$ about the trained global model $\boldsymbol{\omega}^t(\mathcal{K}^t)$ at time slot $t$.
Note that the evaluation function $r(\cdot)$ can be an arbitrary mechanism that quantifies the trained global model with the client's local data.}
To preserve privacy, the model analytics framework should follow the FL architecture that uses the local experience\footnote{This local experience can be testing dataset or behaviors at each client, which is different from the training dataset in the FL process.} without the data collection in the central server when evaluating the quality of the trained global model.
For example, the evaluation function can be the calculation function about prediction accuracy that calculates the differences between actual and predicted values.
Therefore, the clients can utilize this evaluation function to measure the prediction accuracy of the trained global model.
In this way, the higher the prediction accuracy, the better performance of the trained global model.
Note that opinion $r_{t,n}(\boldsymbol{\omega}^t(\mathcal{K}^t))$ is a stochastic variable \textit{w.r.t.} time slot $t$ due to the channel variation and the property of the  SGD method.
Without loss of generality, we assume that $r_{t,n}(\boldsymbol{\omega}^t(\mathcal{K}^t))$ changes in the range of $[0,1]$.

There are two differences between the model analytics part and the FL part.
First, the model analytics framework is performed by drawing conclusions from data rather than training a model in the client with high computation requirements.
Second, due to privacy concerns,  clients may prefer to share information with their neighbors rather than upload it to the central server.
Therefore, clients may not want to upload their opinions to the central server.

\section{Frameworks and Problem Formulation} \label{problem_formulation}
We first introduce two closed-loop model analytics frameworks, i.e., the FL\&FA framework in Section \ref{TQIA0} and the FL\&DA framework in Section \ref{BPUCB_FRWRK}.
Based on the proposed frameworks, we present the \textit{goal-directed} client selection problem and formulate it as a SMAB problem in Section \ref{ClinetSelection}.
Finally, we briefly discuss the solution to the SMAB problem under the context of the closed-loop model analytics framework.

\subsection{The FL\&FA Framework} \label{TQIA0}
The structure of the FL\&FA framework is shown in Fig. \ref{FameW}a.
It can be seen that it contains the FA part and the FL part, where the FA part is performed as follows.
At time slot $t$, each client first receives the trained global model ${{\boldsymbol{\omega} ^t}\left( \mathcal{K}{^t}\right)}$ from the central server.
Notice that this model is obtained at the end of the FL process by running the FedAvg algorithm.
Then, all $N$ clients evaluate this global model in the model analysis part.
Thus, the average opinion of all clients in the central server is calculated by
\begin{equation}\label{est_opinion}
r_{t,\cdot} \left( {{\boldsymbol{\omega} ^t}\left( \mathcal{K}{^t}\right)} \right) = \frac{1}{N} \sum_{n = 1}^N {{r_{t,n}}\left( {{\boldsymbol{\omega} ^t}\left( \mathcal{K}{^t}\right)} \right)}.
\end{equation}
Obviously, a larger average opinion $r_{t,\cdot}$ contributes to a better-trained global model.
Based on these testing results,  the central server can guide clients' training parameters in the next round of FL.
Consequently,  a new global model can be obtained after the FL process in time slot $t+1$.
By repeating these steps, model efficacy can be obtained by maximizing the average opinion $r_{T,\cdot} \left( {{\boldsymbol{\omega} ^T}\left( \mathcal{K}{^T}\right)} \right) $ over time horizon $T$.

To sum up, the FL\&FA framework consists of five steps and proceeds in a closed-loop form.
The details of these steps are given below.
\begin{enumerate}
  \item Each client evaluates the trained global model with its local dataset. Then, the generated opinion is uploaded to the central server through a dedicated channel.
  \item The central server receives opinions from all clients. Based on these opinions, some results can be drawn from the original data.
  For example, in the client selection problem,  $K$ out of $N$ clients will be selected to perform model training at the next time slot based on the evaluation results.
  \item The central server sends decision information to the selected clients. Then, these clients are activated to participate in the FL process.
  \item The trained models of the $K$ clients are uploaded to the central server for model aggregation by using the FedAvg algorithm.
  \item After the FL process, the trained global model is broadcast to all clients.
\end{enumerate}

\subsection{The FL\&DA Framework} \label{BPUCB_FRWRK}
The structure of the FL\&DA framework is shown in Fig.~\ref{FameW}b, where the DA part is decentralized.
The key difference between  Figs.~\ref{FameW}a and \ref{FameW}b is that there is no central server in the latter to coordinate the message exchange between clients.
Instead, we assume that the $N$ distributed clients are self-organized, which forms an undirected graph, i.e., $\mathcal{G} = \{\mathcal{V}, \mathcal{E}\}$.
In this graph, clients are viewed as the set of vertices denoted by $\mathcal{V}$,
and $\mathcal{E}$ is the set of edges,  which can be further written as an $N$-by-$N$ matrix.
That is, $\mathcal{E} = [\boldsymbol{e}_1; \boldsymbol{e}_2; \ldots; \boldsymbol{e}_N]$,
where the $n$-th row vector is $\boldsymbol{e}_n = [e_{n,1}, e_{n,2}, \ldots, e_{n,N} ]$.
If $e_{i,j} = 1$, this means that client $i$ links to client $j$, and $e_{i,j} = 0$ otherwise.
We further assume the relationship is symmetric, i.e., $e_{i,j} = e_{j, i}$.
For example, in Fig.~\ref{BP}, client $i$ links with neighbors 1, 2, and 3.
Thus,  client $i$ can exchange messages with not only its neighbors but also those connected to its neighbors, such as clients 4 and 5.
In this way, each client's information that should be selected is a combination of other clients'  opinions.
This inference is realized in step 1 of  Fig.~\ref{FameW}a, where clients exchange the information using the BP algorithm.

Let $\mathbb{I}^{t} _n$ be the belief of client $n$, where $\mathbb{I}^{t} _n=1$ means that client $n$ should be selected in time slot $t$; $\mathbb{I}^{t} _n=0$, otherwise.
In fact, the better the evaluation score of the trained global model, the higher the probability that a client should be selected.
Define ${b_{t,n}}\left( {{\mathbb{I}^{t}_n}} \right) $  as the belief of client $n$ at time slot $t$.
According to \cite{yedidia2003understanding}, the calculation of ${b_{t,n}}\left( {{\mathbb{I}^{t}_n}} \right) $ is given by
\begin{equation}
	{b_{t,n}}\left( {{\mathbb{I}^{t}_n}} \right) = {\kappa_n}{\phi _n}\left( {{\mathbb{I}^{t}_n},{r_{t,n}}\left( {{\boldsymbol{\omega} ^{t}}} \right)} \right)\!\prod\limits_{i=1}^{N} e_{i,n}{{m_{i,n}}\left( {{\mathbb{I}^{t}_n}} \right)},
	\label{Belief}
\end{equation}
where $\kappa_n$ is the normalization factor and $\phi_n$ is the local function of client $n$.
Here, we define the  local function as the power of the client's opinion, i.e.,
\begin{equation}
	{\phi _n}\left( {{\mathbb{I}^{t}_n},{r_{t,n}}\left( {{\boldsymbol{\omega} ^t}} \right)} \right) = {\left( {{r_{t,n}}\left( {{\boldsymbol{\omega} ^t}} \right)} \right)^2}.
	\label{local}
\end{equation}
In addition, term $m_{i,n}(\mathbb{I}^{t}_n)$ in \eqref{Belief} denotes the message passing function that client $i$ transmits to client $n$ under state $\mathbb{I}^{t}_n$.
Mathematically, $m_{i,n}$ can be obtained by iteratively updating the following equation
\begin{equation}\label{message}
	\begin{aligned}
		& \!{m_{i,n}}\left( {{\mathbb{I}^{t}_n}} \right)=	\\
		& \!\sum\limits_{{\mathbb{I}^{t}_i} \in \left\{ {0,1} \right\}}\! {{\phi _i}\!\left( {{\mathbb{I}^{t}_i},{r_{t,i}}\left( {{\boldsymbol{\omega} ^t}} \right)} \right){\psi _{i,n}}\!\left( {{\mathbb{I}^{t}_i},{\mathbb{I}^{t}_n}} \right)}\! \prod\limits^{N}_{k =1, k\neq n} e_{k,i}{{m_{k,i}}\left( {{\mathbb{I}^{t}_i}} \right)},
	\end{aligned}
\end{equation}
where
\begin{equation}
	{\psi _{i,n}}\left( {{\mathbb{I}^{t}_i},{\mathbb{I}^{t}_n}} \right) = \exp \left( { - Cd_{{i},{n}}^\beta } \right)
	\label{compa}
\end{equation}
is the compatibility function between clients $n$ and $i$.
Without loss of generality, we adopt the large-scale fading in \cite{yuan2012defeating} as the compatibility function of \eqref{compa},
where $C$ and $\beta$ are two constants of the fading-relative parameters.
In addition, ${d_{i,n}}$ denotes the Euclidean distance between clients $i$ and $n$, which characterizes the correlation of two clients' data.
Therefore, a larger distance leads to a lower value of the compatibility function.
\begin{figure}[t]
	\centering
	\includegraphics [scale=0.45,trim=0 0 0 0]{./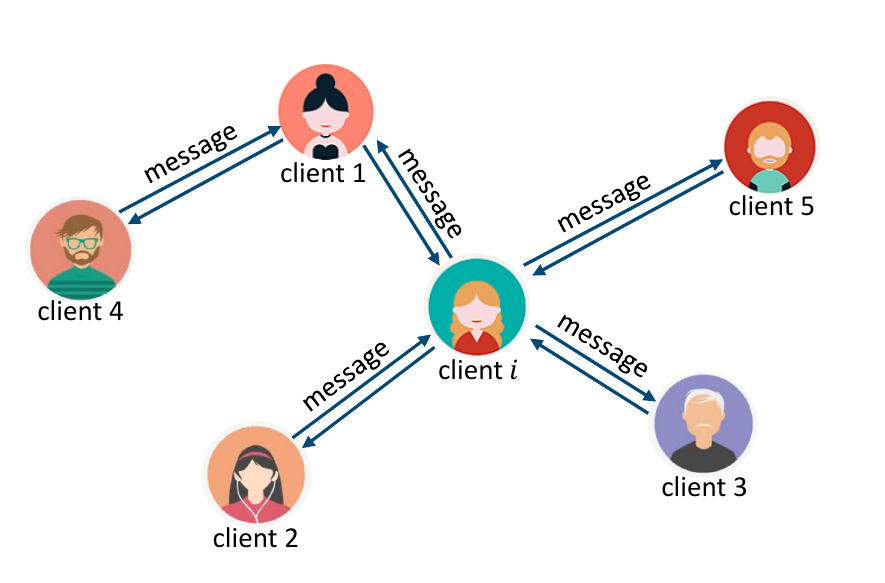}
	\caption{Message exchange among client $i$ and its neighbors $1$, $2$, and $3$.}
	\label{BP}
\end{figure}

To sum up, the FL\&DA framework consists of five steps and proceeds in a close loop.
The details of these steps are given below.
\begin{enumerate}
  \item Each client receives the global model from the central server. Then, it evaluates the trained global model with its local dataset.
  \item Clients exchange their beliefs ${b_{t,n}}$  through the decentralized network with the BP algorithm.
  \item Based on these beliefs and the decentralized network, each client ranks the opinion $r_{t,n}$ using the gossip method.  Then,  $K$ out of $N$ clients are activated to participate in the FL process.
  \item The trained models of the $K$ clients are uploaded to the central server for model aggregation by using the FedAvg algorithm.
  \item After the FL process, the trained global model is broadcast to all clients.
\end{enumerate}

\subsection{Goal-Directed Client Selection Problem}\label{ClinetSelection}
Based on the FL\&FA and FL\&DA frameworks, we further investigate the client selection problem in FL.
The flow diagram of the client selection framework is shown in Fig.~\ref{FA}.
At time slot $t$, each client first receives the trained global model ${{\boldsymbol{\omega} ^t}\left( \mathcal{K}{^t}\right)}$ from the central server.
Notice that this model is obtained at the end of the FL process by running the FedAvg algorithm.
Then, all $N$ clients evaluate this global model using their testing dataset in the model analysis part.
Thus, the average opinion of all clients is given by
\begin{equation}\label{est_opinion}
r_{t,\cdot} \left( {{\boldsymbol{\omega} ^t}\left( \mathcal{K}{^t}\right)} \right) = \frac{1}{N} \sum_{n = 1}^N {{r_{t,n}}\left( {{\boldsymbol{\omega} ^t}\left( \mathcal{K}{^t}\right)} \right)}.
\end{equation}
Obviously, a larger average opinion $r_{t,\cdot}$ contributes to a better-trained global model.
Based on all clients' opinions,  $K$ out of $N$ clients will be selected to train a global model by using a specific selection strategy.
Consequently,  a new global model can be obtained after the FL process in time slot $t+1$.
By repeating these steps, an optimal set of clients can be determined to maximize the average opinion $r_{T,\cdot} \left( {{\boldsymbol{\omega} ^T}\left( \mathcal{K}{^T}\right)} \right) $.
\begin{figure}[!t]
	\centering
	\includegraphics [scale=0.60,trim=0 0 0 0]{./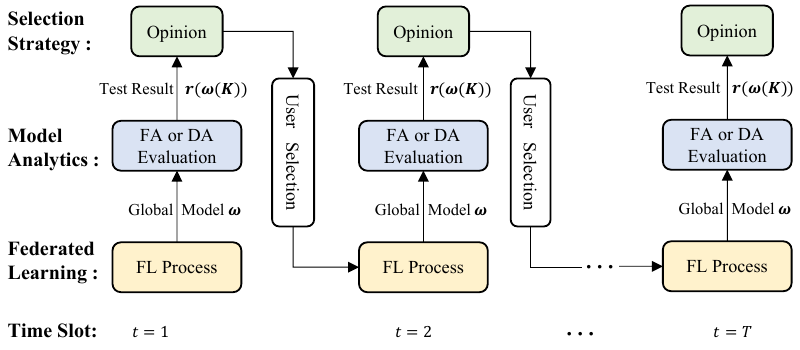}
	\caption{An illustration of the client selection framework.}
	\label{FA}
\end{figure}

We aim to improve the performance (i.e., the average opinion) of the trained global model by selecting $K$ desirable clients over time horizon $T$ subject to limited communication resources.
This problem can be formulated as an online stochastic optimization problem, i.e.,
\begin{equation}\label{target}
	\begin{aligned}
		& \underset{}{\max\limits_{\mathcal{K}}}
		& & \frac{1}{T}\frac{1}{N}\sum\limits_{t = 1}^T  {\sum\limits_{n = 1}^N \mathbb{E}\left[ {{r_{t,n}}\left( {{\boldsymbol{\omega} ^t}\left( \mathcal{K}{^t}\right)} \right)} \right]  } \\
		& \text{s.t.}
		& & \mathcal{K}{^t} \subseteq  \mathcal{N}, t = 1, \ldots, T,\\
		& & & \left| \mathcal{K}{^t} \right| = K, t = 1, \ldots, T,
	\end{aligned}
\end{equation}
where  the second constraint is the cardinal of set $\mathcal{K}{^t}$, i.e., the communication budget is $K$.
Intuitively, it can be solved directly using the exhaustive search method when the feasible solutions are discrete and finite.
However, it is impossible because the evaluation function $r(\cdot)$ is typically stochastic and unknown, which may become better understood as time is involved.
Hence,  we first need to estimate the mean value of $r_{t,n} \left( {{\boldsymbol{\omega} ^t}\left( \mathcal{K}{^t}\right)} \right)$.
However, due to the many combinations, the estimate-then-commit method will take a long time in the model training phase.
For example, when considering a system with $N=20$ clients and  $K=5$ communication budgets,  the calculation of $5$ out of $20$ is over $10^4$.
One way to overcome this drawback is to estimate the mean value of $r_{t,n} \left( {{\boldsymbol{\omega} ^t}\left( \mathcal{K}{^t}\right)} \right)$ while adaptively choosing a subset of clients for the model training.
This incurs the EE dilemma where one needs to exploit the client set with the maximum estimated mean reward as many times as possible to boost its accumulated rewards. It is also necessary to explore the other client sets as often as possible to avoid missing the optimal one.
This motivates us to employ the SMAB framework to model  problem~\eqref{target} to better balance the EE dilemma.

\section{Quick-Init UCB Based Client Selection in the FL\&FA Framework} \label{OUS1}
We first give some definitions of the SMAB problem under the  FL\&FA framework.
Then, we present the  UCB algorithm to solve the stochastic bandit problem.
To reduce the large arm space, we propose the Quick-Init UCB algorithm in Section \ref{TQIA}.
Finally, we derive a regret upper bound for the Quick-Init UCB algorithm in Section \ref{PA1}.

\subsection{The Quick-Init UCB Algorithm}\label{TQIA}
In this SMAB problem, the player in the FL\&FA framework is the central server, and the arms are the combinations of the $K$ clients, i.e., the $K$ out of $N$ clients.
Let $\mathcal{A}$ be the set of arms, i.e., $\mathcal{A}=\{a=1,2,\ldots,|\mathcal{A}|\}$.
At each time slot $t$, the player first picks up an arm from the arm set $\mathcal{A}$.
Then, the selected $K$ clients are collaborative to train a global model ${{\boldsymbol{\omega} ^t}\left( \mathcal{K}{^t}\right)}$ in the FL process.
After training, all $N$ clients will receive the trained global model and test it with their local dataset in the FA phase.
By collecting all clients' opinions to the central server, the player will obtain a corresponding reward, i.e., ${r_{t, a}}\left( {{\boldsymbol{\omega} ^t}\left( \mathcal{K}{^t}\right)} \right), \forall a \in \mathcal{A}$.
With some conceptual ambiguity, let ${r_{t,a}}\left( {{\boldsymbol{\omega} ^t}\left( \mathcal{K}{^t}\right)} \right)= {r_{t,\cdot}}\left( {{\boldsymbol{\omega} ^t}\left( \mathcal{K}{^t}\right)} \right)$, where the latter is given in \eqref{est_opinion}.
Let $\mathbb{I}^{t} _a$ be an indicator that $\mathbb{I}^{t} _a =1$ denotes arm $a$ is selected at time slot ${t}$; otherwise $\mathbb{I}^{t} _a = 0$.
Then, the average empirical reward of arm $a$ is defined as
\begin{equation}
	\bar{r}_{t,a}  = \frac{1}{{{\Pi_{t,a}}}}\sum\limits_{i = 1}^t \left( {\mathbb{I}^{i} _a} \times {{r_{i,a}}\left( {{\boldsymbol{\omega} ^{i}}\left( \mathcal{K}{^{i}}\right)} \right)} \right),
	\label{exploit}
\end{equation}
where $\Pi_{t,a}$ is the number of times that arm $a$ has been selected up to time slot $t$.

We employ the UCB algorithm to trade off the exploitation and exploration dilemma raised in the above stochastic bandit problem.
According to \cite{auer2002finite}, the UCB index is constructed by
\begin{equation}\label{ucb-index}
\Psi^{\mathrm{ucb}}_{t,a} = \bar{r}_{t,a}  + \mu \sqrt {\frac{{\ln  t }}{{{\Pi_{t,a}}}}}, \ \forall a \in \mathcal{A},
\end{equation}
where $\mu$ is a hyperparameter that controls the exploration degree.
Note that the first term of \eqref{ucb-index} is used for exploitation;
while the second term is an upper bound of $\bar{r}_{t,i}$ that accounts for exploration.
The UCB algorithm selects an arm with the largest UCB index at each time slot.
However, this UCB algorithm cannot be directly applied to our stochastic bandit problem since the number of the arm can be very large with the combinational operation.
It may take a long time for the algorithm to traverse all arms.

To overcome this challenge, we propose a Quick-Init UCB algorithm to reduce the algorithm's initiation phase.
The Quick-Init UCB algorithm only explores each client once a time when initiating the empirical reward $\bar{r}_{t,a}, \forall a \in \mathcal{A}$ at cold-start time.
In this way, the number of initialization time slots in the UCB algorithm is decreased from  $|\mathcal{A}|$ to $N$ in the Quick-Init UCB algorithm.
Specifically, it first divides the $N$ clients into $N / K$ groups called cold-start arms, and each group contains $K$ clients.
Each cold-start arm is sequentially selected at the first $t_0 = N/K$ time slots.
After testing, the server obtains the reward for all clients' average opinions.
Note that each reward ${r_{t, a}}\left( {{\boldsymbol{\omega} ^t}\left( \mathcal{K}{^t}\right)} \right)$ is correlated to the selected client set $\mathcal{K}_t$.
Then, the Quick-Init UCB algorithm adds the reward of client $i$ in set $\mathcal{K}_t$ to the arms containing client $i$.
As a result, all arms can be initialized within $t_0 = N/K$ time slots.

The pseudo-code of the Quick-Init UCB algorithm is shown in Algorithm \ref{cen_ALG}.
In the initialization phase, the central server chooses each client once according to the random combination of the $K$ clients.
Thus, the total time slots at the initialization phase are ${N \mathord{\left/ {\vphantom {N K}} \right. \kern-\nulldelimiterspace} K}$.
At each time slot $t$, the central server first computes the UCB index $\Psi^{\mathrm{ucb}}_{t,i}$ of each arm.
Then, it selects an arm $\mathbb{I}_a^t$ (i.e., the client set $\mathcal{K}^t$) with the highest UCB index.
By activating the $K$ clients in the set $\mathcal{K}^t$, a trained global model can be obtained by running the FedAvg algorithm.
In the FA phase, each client downloads and evaluates the trained global model using their local dataset.
After that, each client's test opinion should be uploaded to the central server.
The reward of arm $\mathbb{I}_a^t$ is the average opinion of all clients, and then the server updates the UCB index based on this reward.
Algorithm \ref{cen_ALG} repeats the above steps until the stopping time $T$ is reached.
\begin{algorithm}[!t]
	\caption{Quick-Init UCB algorithm in the FL\&FA framework}
	\label{cen_ALG}
	\begin{algorithmic}[1]
		\Require {client set $\mathcal{N}$, arm set $\mathcal{A}$, data size $D$, and communication budget $K$}
		\Ensure {the selected client set $\mathcal{K}^{t-1}$ at time slot $t$}
		\State {Initialization: Divide clients into $N/K$ groups as the cold-start arms.
		Choose each cold-start arm at the first $t_0 = N/K$ time slots to approximate the initial empirical reward $\bar{r}_{t,a}$; set the selected time ${\Pi_{t,a}} = 1, a \in \mathcal{A}$ and the hyper-parameter $\mu$.}
		\For {$t = {t_0} + 1, \ldots, T$, the central server}
		\For { $a = 1, \ldots, |\mathcal{A}|$}
		\State {compute the UCB index $\Psi^{\mathrm{ucb}}_{t,a}$}  using~\eqref{ucb-index}
		\EndFor
		\State Select arm $\mathbb{I}_a^t$ (or set $\mathcal{K}^t$) with the highest UCB index
        \State The $\mathcal{K}^t$ clients train the model using SGD method
        \State Obtain the global model $\boldsymbol{\omega}^{t} \left(\mathcal{K}^t\right) $ using \eqref{FL target} and \eqref{ave01}
       \State \textbf{$\blacktriangleright$ Federated Analytics:}
       \For { $n = 1, \ldots,N$, each client}
		\State download the global model $\boldsymbol{\omega}^{t} \left(\mathcal{K}^t\right) $ from server
       \State  tests ${\boldsymbol{\omega} ^t}\left( \mathcal{K}{^t}\right)$ with its local data
       \State  upload the opinion ${{r_n}\left( {{\boldsymbol{\omega} ^t}\left( \mathcal{K}{^t}\right)} \right)}$ to the server
		\EndFor
         \State Central server calculates the reward $\bar{r}_{\mathbb{I}^t_a}$ using \eqref{est_opinion}
		\State {Update the selected frequency ${\Pi_{t,{\mathbb{I}^t_a}}} = {\Pi_{t-1,{\mathbb{I}^t_a}}} + 1$}
		\State Update the empirical average opinion $\bar{r}_{t,{\mathbb{I}^t_a}} $
		\EndFor		
	\end{algorithmic}
\end{algorithm}

\subsection{\com{Complexity Analysis and Implementation Consideration}}\label{SecICCA}
\com{
\textbf{Complexity:} We first give a complexity analysis for the Quick-Init UCB algorithm.
In Algorithm \ref{cen_ALG}, the input parameters are the client set $\mathcal{N}$, arm set $\mathcal{A}$, data set $D$, and communication budget $K$.
At each time slot, the UCB algorithm in lines 5 and 6 has the computational complexity of $\mathcal{O}(|\mathcal{A}| + |\mathcal{A}|^2)$,
where the second term comes from the sorting operation.
The FL training processes of the $K$selected clients contribute to the complexity of $\mathcal{O}(KBE)$, where $B$ and $E$ are the batch size and epoch size
parameters in the SGD method, respectively.
The evaluation step at line 3 has a $\mathcal{O}(ND)$ complexity.
Thus, the total computational complexity of Algorithm \ref{cen_ALG} is  $\mathcal{O} \left(  T|\mathcal{A}| + T|\mathcal{A}|^2 + TK B E +N DT \right)$.
If the client has the same training and testing data size,  the total computational complexity of Algorithm \ref{cen_ALG} is about  $\mathcal{O}\left(  T|\mathcal{A}|^2 +N DT \right)$, where $|\mathcal{A}| = \binom{N}{K}$.
Therefore, we see that the computational complexity mainly depends on the system parameters of $N$, $K$, and $D$.}

\com{\textbf{Implementation:}
We then discuss the practical implementation challenges of the Quick-Init UCB algorithm in real-world scenarios from various aspects, including technical considerations, privacy concerns, security implications, and scalability issues.
The Quick-Init UCB algorithm is structured based on the FL\&FA framework, as referenced in Section \ref{TQIA0}. Within this centralized framework, we assume that communication between clients and the central server is synchronized. In this setup, clients should upload their model parameters and test opinions to the central server while their local data remains cached on the client side.
To account for system heterogeneity, data heterogeneity, and channel variation, we incorporate these factors into the model aggregation algorithm, represented by Eqs. \eqref{FL target} and \eqref{ave01}. This approach allows the proposed algorithm to accommodate diverse practical implementations.

However, the Quick-Init UCB algorithm encounters scalability challenges, particularly in high-density networks. The issue arises due to the exponential growth in feasible solutions when selecting $K$ clients. For instance, with $N=100$ and $K=5$, the exploration space becomes $|\mathcal{A}| =\binom{100}{5}>10^7$, surpassing the memory capacity of conventional computers.
We propose the FL\&DA framework in Section \ref{OUS2} to tackle this scalability concern. The FL\&DA framework offers a potential solution to mitigate the scalability problem using a decentralized network topology.}

\subsection{Performance Analysis}\label{PA1}
At last, we derive a regret upper bound for the Quick-Init UCB algorithm.
First, we adopt the \emph{pseudo-regret} as the performance metric for this stochastic bandit problem,
defined as the expected cumulative performance loss between the optimal and currently adopted policies.
Mathematically, it can be defined as
\begin{equation}\small
	\begin{aligned} \label{reg}
{\mathop{\rm Reg}\nolimits}\left( T \right) &= \mathbb{E}\left[
{\sum\limits_{t = 1}^{T} {{r_{t, a^{\ast}}}\left( {{\boldsymbol{\omega} }\left( {{\mathcal{K}^*}} \right)} \right)} } \!-\! \sum\limits_{t = 1}^T{\sum\limits_{a = 1}^{|\mathcal{A}|} \mathbb{I}^{t} _a {{r_{t,a}}\left( {{\boldsymbol{\omega} ^t}\left( {{\mathcal{K}^t}} \right)} \right)} } \right]\\
& = \sum\limits_{t = 1}^{T} \hat{r}\left(\mathcal{K}^{\ast} \right) - \sum_{t = 1}^T\sum_{a = 1}^{|\mathcal{A}|}  \hat{r}\left(\mathcal{K}_a \right)  \mathbb{E}\left[ \mathbb{I}^{t} _a \right] \\
& = \sum_{a=1}^{\left|\mathcal{A} \right|} \Delta_a \mathbb{E}\left[\Pi_{T,a}\right],
	\end{aligned}
\end{equation}
where the expected reward of arm $a$ (i.e., client set $\mathcal{K}_a$) is denoted as $\hat{r}(\mathcal{K}_a) = \mathbb{E}\left[{ {{r_a}\left( {{\boldsymbol{\omega}}\left( {{\mathcal{K}_a}} \right)} \right)} } \right]$, the gap $\Delta_a$ is $\mathbb{E}\left[r_{a^{\ast}}\left(\boldsymbol{\omega} \left(\mathcal{K}^{\ast}\right) \right)\right] -\hat{r}(\mathcal{K}_a)$, and
$\Pi_{T,a}$ is the number of times that arm $a$ has been selected up to time slot $T$.
The optimal client set $\mathcal{K}^{\ast}$ (i.e., arm $a^{\ast}$) is obtained by solving problem \eqref{target}.
Note that the first expectation is operated on the random reward and the selection strategy;
while the second and third expectations are operated on the random selection strategy.

Then, we define the set of bad arms by
\begin{equation}\label{bad_set1}
\mathcal{K}_{\mathrm{B}}=\left\{\mathcal{K}_a \mid \hat{r}(\mathcal{K}_a)
< \operatorname{opt}_{\boldsymbol{r}}\right\},
\end{equation}
where
$\mathcal{K}_a$ is the client set in arm $a$ and $\operatorname{opt}_{\boldsymbol{r}}=\mathbb{E}\left[r_{a^{\ast}}\left(\boldsymbol{\omega} \left(\mathcal{K}^{\ast}\right) \right)\right] $ is the optimal expected reward.
Thereafter, we obtain the worst and best reward gaps between the best and worst arm compared with the optimal super arms, respectively, i.e.,
\begin{equation}\label{best0}
R_{\min }=\operatorname{opt}_{\boldsymbol{r}}-\max \left\{\hat{r}(\mathcal{K}_a) \mid \mathcal{K}_a \in \mathcal{K}_{\mathrm{B}}\right\},
\end{equation}
\begin{equation} \label{worst0}
R_{\max}=\operatorname{opt}_{\boldsymbol{r}}-\min\left\{\hat{r}(\mathcal{K}_a) \mid \mathcal{K}_a \in \mathcal{K}_{\mathrm{B}}\right\}.
\end{equation}
Finally, the regret upper bound is given in Theorem~\ref{Theorem00}.
\begin{theorem}\label{Theorem00}
	\textit{For the Quick-Init UCB algorithm with $N$ clients and ${\left|\mathcal{A} \right|}$ arms, assume that the communication budget is $K$.
		The upper  pseudo-regret bound of~\eqref{reg}  is given by
		\begin{equation}
			\begin{split}
				{\mathop{\rm Reg}\nolimits}\left( T \right) \le \frac{{8{\left|\mathcal{A} \right|}\ln \left( T \right)}}{{{R_{\min }}}} + \left( {\frac{{{\pi ^2}}}{3} + 1} \right){\left|\mathcal{A} \right|}R_{\max }.
			\end{split}
	\end{equation}}
\end{theorem}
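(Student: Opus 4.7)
The plan is to follow the classical UCB1 regret analysis of Auer, Cesa-Bianchi, and Fischer (2002), adapted to the combinatorial arm structure used here. I start from the regret decomposition already written in \eqref{reg}, namely $\mathrm{Reg}(T) = \sum_{a=1}^{|\mathcal{A}|}\Delta_a\,\mathbb{E}[\Pi_{T,a}]$, so it suffices to bound $\mathbb{E}[\Pi_{T,a}]$ for each suboptimal arm $a\in\mathcal{K}_{\mathrm{B}}$ and then aggregate using the definitions of $R_{\min}$ and $R_{\max}$.

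For the per-arm bound I will use the standard three-event decomposition. Fix a suboptimal arm $a$ and an integer threshold $\ell$ (to be chosen as $\ell=\lceil 8\ln T/\Delta_a^2\rceil$). A round $t$ in which arm $a$ is chosen and $\Pi_{t-1,a}\geq\ell$ forces the UCB index of $a$ to exceed that of the optimal arm $a^{\ast}$; a union bound then shows that at least one of (i) $\bar r_{t,a^{\ast}}+\mu\sqrt{\ln t/\Pi_{t-1,a^{\ast}}}\leq\mathbb{E}[r_{a^{\ast}}]$, or (ii) $\bar r_{t,a}\geq\hat r(\mathcal{K}_a)+\mu\sqrt{\ln t/\Pi_{t-1,a}}$, must hold. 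Since the per-round reward ${r_{t,a}}\in[0,1]$ (as stated after \eqref{est_opinion}), each of these tail events can be controlled by Chernoff--Hoeffding, yielding a probability of at most $t^{-4}$ for a suitable constant $\mu$ (the classical choice being $\mu=2$, so that $2\mu^2=8$). Summing these probabilities over $t$ gives the familiar $\pi^2/3$ contribution, while the threshold $\ell$ contributes the $8\ln T/\Delta_a^2$ term. The overall per-arm bound therefore takes the form
\begin{equation}
\mathbb{E}[\Pi_{T,a}] \;\leq\; \frac{8\ln T}{\Delta_a^2} + 1 + \frac{\pi^2}{3}.
\end{equation}

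Substituting into the regret decomposition and using $\Delta_a\geq R_{\min}$ in the $\ln T$ term and $\Delta_a\leq R_{\max}$ in the constant term yields
\begin{equation}
\mathrm{Reg}(T) \;\leq\; \sum_{a\in\mathcal{K}_{\mathrm{B}}}\!\left(\frac{8\ln T}{\Delta_a} + \Delta_a\Bigl(1+\frac{\pi^2}{3}\Bigr)\right) \;\leq\; \frac{8|\mathcal{A}|\ln T}{R_{\min}} + \Bigl(\frac{\pi^2}{3}+1\Bigr)|\mathcal{A}|\,R_{\max},
\end{equation}
which is exactly the claim.

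The main obstacle I anticipate is justifying that the Quick-Init initialization does not break the concentration argument behind (i)--(ii). Unlike vanilla UCB1, the first $t_0=N/K$ rounds seed every arm's empirical mean using rewards collected while a different (cold-start) arm was played, so the samples averaged in $\bar r_{t,a}$ are not i.i.d.\ draws from arm $a$. I would handle this by noting that ${r_{t,a}}$ in \eqref{exploit} is actually the same scalar average $r_{t,\cdot}(\boldsymbol{\omega}^t(\mathcal{K}^t))\in[0,1]$ for every arm containing the active client group, so each summand in $\bar r_{t,a}$ is still a bounded random variable, and after the initialization phase all further updates use genuine play-based samples. A Hoeffding-type inequality conditional on the seeded value (or, alternatively, absorbing the initialization error into the additive constant via the trivial bound $\Delta_a\leq R_{\max}$ for the at most one pre-seeded pull) suffices to preserve the $t^{-4}$ tail and hence the stated bound.
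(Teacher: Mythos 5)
Your proposal is correct and follows essentially the same route as the paper: the paper's proof likewise invokes the finite-time UCB1 analysis of Auer et al.\ to get the per-arm bound $8\ln T/\delta_a + (1+\pi^2/3)\delta_a$ and then replaces $\delta_a$ by $R_{\min}$ in the logarithmic term and by $R_{\max}$ in the constant term. If anything you are more careful than the paper, which simply asserts that the quick-initialization phase ``will not change the time-increase trend'' without addressing the seeded-sample bias you flag in your last paragraph.
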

\begin{proof}
Please see Appendix A.
$\hfill\blacksquare$
\end{proof}

\begin{remark}
Theorem~\ref{Theorem00} shows that the Quick-Init UCB algorithm achieves asymptotically regret bound under this centralized framework.
In other words,  the cumulative regret ${\mathop{\rm Reg}\nolimits}\left( T \right)/T\rightarrow 0$ when $T\rightarrow \infty$,
which indicates that the proposed algorithm converges when the time horizon $T$ is sufficiently large.
\end{remark}

\section{BP-UCB Based client Selection in the FL\&DA Framework} \label{OUS2}
In the previous section, we have considered the centralized model analytics framework, where all clients are required to upload their opinions to the central server.
In some applications, clients may prefer to share information with their neighbors rather than upload it to the central server due to privacy concerns.
This requires a decentralized model analytics framework.
To capture this decentralized feature, we employ the BP algorithm for messages passing among clients.
 Then, we propose a BP-UCB algorithm in Section \ref{BPUCB_ALG} for the client selection problem (or the SMAB problem).
Finally, we provide a convergence analysis for the BP algorithm and a regret upper bound for the BP-UCB algorithm in Section \ref{BPUCB_PA}.

\subsection{The BP-UCB Algorithm}\label{BPUCB_ALG}
In this SMAB problem, the arm is the client.
The reward is the client's opinion $r_{t,n}\left( \boldsymbol{\omega}(\mathcal{K}^t) \right)$.
Note that this differs from the SMAB problem in the FL\&FA framework, where the arm is the combination of the $K$ clients, and the reward is all clients' average opinions.
Thus, the system's goal is to maximize the accumulated rewards by selecting $K$ optimal clients for model training.
However, this will result in an unfair client selection strategy, deviating from our objective in Section \ref{problem_formulation}.
To address this issue, we incorporate the client's belief into constructing the UCB index, which forms a stable probability distribution over all clients.
Formally, the index function of the BP-UCB algorithm for client $n$ at time slot $t$ can be constructed by
\begin{equation}\label{ucb1}
	\Psi^{\mathrm{bpucb}}_{t,n} = \bar{r}_{t,n}  + \frac{1}{N}{b_{t,n}}\left( {{\mathbb{I}^{t} _n}} \right) + \mu \sqrt {\frac{{\ln  t }}{{{\Pi_{t,n}}}}},
\end{equation}
where
\begin{equation}
	\bar{r}_{t,n}  = \frac{1}{{{\Pi_{t,n}}}}\sum\limits_{i = 1}^t \left( {\mathbb{I}^{i} _n} \times {{r_{t,n}}\left( {{\boldsymbol{\omega} ^{i}}\left( \mathcal{K}{^{i}}\right)} \right)} \right).
	\label{exploit02}
\end{equation}
It can be seen that equation \eqref{ucb1} consists of three terms.
The first two terms are related to the estimated mean reward $\bar{r}_{t,n}$, which is used for exploitation;
while the third term is an optimism confidence bound of the estimated mean reward, accounting for exploration.
Thus, it can efficiently balance the EE dilemma.
There are three differences between the Quick-Init UCB algorithm and the BP-UCB algorithm.
First, there is no central server to collect all clients (or arms)' opinions to calculate the UCB index in the BP-UCB algorithm.
Second, there are correlations among $N$ clients in a decentralized FL\&DA framework, i.e., the client will be influenced by its neighbors,
while the FL\&FA framework fails to model this relationship.
Third, the number of selected arms at each time slot is changed to $K$ in the  BP-UCB algorithm instead of selecting only one arm with the highest UCB index in the Quick-Init UCB algorithm.
Although these differences exist, we are interested in studying whether the BP-UCB algorithm can perform better than the Quick-Init UCB algorithm.

\begin{algorithm}[t]
	\caption{BP-UCB algorithm in the FL\&DA framework for client $n$}
	\label{OUS_ALG}
	\begin{algorithmic}[1]
		\Require {clients $\mathcal{N}$, data set $D$, and communication budget $K$.}
		\Ensure {the state of client $n$.}
		\State {Initialization: the average empirical reward $\bar{r}_{1,n}$, the selected times ${\Pi_{1,n}} = 1$, and the hyperparameter $\mu$.}
		\For {$t = 2, \ldots, T$}
		\State {Compute the BP-UCB index $\Psi^{\mathrm{bpucb}}_{t,n}$} using~\eqref{ucb1}
        \State  Exchange these indices using the gossip method
		\If  {$\Psi^{\mathrm{bpucb}}_{t,n}$ is the $K$ highest UCB indices}
         \State Client $n$ trains the model using the SGD method
         \State {Update the selected frequency ${\Pi_{t,n}} = {\Pi_{t-1,n}} + 1$}
         \State {Update the average opinion $\bar{r}_{t,n}$} using \eqref{exploit02}
         \Else {}
         \State {Keep the selected frequency ${\Pi_{t,n}} = {\Pi_{t-1,n}} $}
         \State {Keep the empirical average opinion $\bar{r}_{t,n} = {\bar{r}_{t-1,n}} $}
      \EndIf
      \State \textbf{$\blacktriangleright$ Democratized Analytics:}
      \State {Receive the global model from the server}
         \State {Test the global model ${{r_{t,n}}\left( {{\boldsymbol{\omega} ^t}\left( \mathcal{K}{^t}\right)} \right)}$}
		\State {Compute the local function $\phi_n$ using~\eqref{local}}
		\State {Compute the compatibility function $\psi _{i,n}$ with~\eqref{compa}}
		\State {Compute the message $m_{i,n}$ using~\eqref{message}}
		\State {Compute the belief ${b_{t,n}}$ using~\eqref{Belief}}
		\EndFor		
	\end{algorithmic}
\end{algorithm}
Now, we are in the position to present the BP-UCB algorithm.
The pseudo-code is shown in Algorithm \ref{OUS_ALG}, which runs by each client.
At time slot $t$, each client first computes the BP-related parameters using \eqref{Belief}-\eqref{compa}.
Specifically, each client calculates the local functions as well as the compatibility functions, and then these messages exchange among the $N$ clients until they converge\footnote{Note that the convergence time of the message passing is ignored here because the BP and FA process can be done in different time-scale.}.
Thereafter, each client calculates its belief ${b_{t,i}}\left( {{\mathbb{I}^{t}_i}} \right)$  by \eqref{Belief}.
Based on the belief and test opinion, client $n$ can obtain the BP-UCB index  $\Psi^{\mathrm{bpucb}}_{t,n}$ by \eqref{ucb1}.

The remaining problem is how to determine the order of the $N$ clients' BP-UCB indices.
Here, we adopt the gossip method \cite{foster2004research} for all clients to vote for this rank.
Specifically, in graph $\mathcal{G}$, clients communicate with their neighbors via the links in $\mathcal{E}$ at each time slot.
After a few iterations, all clients can agree on a ranked BP-UCB index.
For example, as shown in Fig.~\ref{BP}, client $i$ sends its BP-UCB index to its neighbors $\{1, 2, 3, 5\}$,  which cache client $i$'s  BP-UCB index and forward it to client $4$.
In this way, all clients know client $i$'s  BP-UCB index and sort a rank.
Repeating the above steps $N$ times, all clients can agree on a ranking of the BP-UCB indices.
Therefore, client $i$  can determine whether it is in the top $K$ according to this order.

Finally, clients with the $K$ highest BP-UCB indices are activated to participate in the FL process by running the FedAvg algorithm.
Note that the FL process is centralized, while the DA is decentralized.
After the FL process, the central server will broadcast the trained global model to all clients.
Thus,  each client can test this global model with its local dataset.
Based on the test opinion and selection strategy,  each client updates its selected frequency and empirical average opinions.
This information will be used to construct the BP-UCB index $\Psi^{\mathrm{bpucb}}_{t,n}$  at the next time slot.

\subsection{\com{Complexity Analysis and Implementation Consideration}}\label{SecICCA02}
\com{
\textbf{Complexity:} We first give a complexity analysis for the BP-UCB algorithm.
In Algorithm \ref{OUS_ALG}, the input parameters are the client set $\mathcal{N}$, data set $D$, and communication budget $K$.
At each time slot, the UCB algorithm in lines 3 and 5 has the computational complexity of $\mathcal{O}(N + N^3)$,
where the second term comes from the sorting operation among $N$ clients.
The gossip method has a complexity of $\mathcal{O}(N^2)$ \cite{georgiou2008complexity}.
The FL training at the selected $K$ clients contributes to the complexity of $\mathcal{O}(KBE)$, where $B$ and $E$ are the batch size and epoch size
parameters in the SGD method, respectively.
The evaluation step at line 15 has a complexity of $\mathcal{O}(ND)$, and the BP algorithm at lines 16-20 has a complexity of $\mathcal{O}(N)$.
Thus, the total computational complexity of Algorithm \ref{cen_ALG} is  $\mathcal{O} \left(  TN^3 + T N^2+N DT +2 N T  + TK B E \right)$.
If the client has the same training and testing data size,  the total computational complexity of Algorithm \ref{cen_ALG} is about  $\mathcal{O}\left(  N^3T +N DT \right)$.
Therefore, the computational complexity mainly depends on $N$ and $D$.}

\com{\textbf{Implementation:}
We then present the practical implementation challenges of the BP-UCB algorithm within real-world scenarios.
The BP-UCB algorithm is structured based on the FL\&DA framework, as referenced in Section \ref{BPUCB_FRWRK}.
In this centralized training and decentralized analytics framework, we assume that communication between clients and the central server is synchronized. In this setup, clients are only required to upload their model parameters to the central server and share their test opinions using the network topology.
The privacy concern is addressed by keeping their local data on the client side.
To account for system heterogeneity, data heterogeneity, and channel variation, we incorporate these factors into the model aggregation algorithm, represented by Eqs. \eqref{FL target} and \eqref{ave01}. This approach allows the proposed algorithm to accommodate diverse practical implementations.
The decentralized analytics framework can overcome the scalability problem, but it also suffers from high communication overhead
due to the BP algorithm's message-passing operation.}

\subsection{Performance Analysis}\label{BPUCB_PA}
We first give a convergence analysis for the BP algorithm.
In the DA framework, the messages need to be exchanged through network graph $\mathcal{G}$ by using the BP algorithm.
In the following, we investigate the conditions of BP convergence under local function \eqref{local} and compatibility function \eqref{compa}.
We denote $d_{\max}$ as the maximum distance between two arbitrary clients in the network.
Consequently, we show the convergence of this iteration in Theorem~\ref{Theorem03}.
\begin{theorem}\label{Theorem03}
\textit{
	If $\left( {N - 1} \right)\tanh \left| {d_{\max }^\beta}\right| < 1$ strictly holds in the network, the BP algorithm will converge to a unique point, irrespective of the initial messages.
}
\end{theorem}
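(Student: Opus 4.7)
The plan is to recast the BP message-update \eqref{message} as a fixed-point iteration on a compact space and then show it is a contraction under the stated condition, so that the Banach fixed-point theorem supplies both existence of a unique fixed point and global convergence from arbitrary initial messages. Because each hidden variable $\mathbb{I}^t_n$ is binary, each message $m_{i,n}(\cdot)$ is completely described by a single scalar, e.g.\ the log-likelihood ratio $\lambda_{i,n} = \tfrac{1}{2}\log\bigl(m_{i,n}(1)/m_{i,n}(0)\bigr)$. Reparameterising \eqref{message} in this way turns each update into an expression of the form $\lambda_{i,n} = f\bigl(\{\lambda_{k,i}\}_{k\neq n}; J_{i,n}\bigr)$, where the pairwise interaction strength $J_{i,n}$ is read off from the compatibility kernel \eqref{compa} and is bounded in magnitude by $|d_{i,n}^\beta|\le |d_{\max}^\beta|$ (the constant $C$ is absorbed into the definition).

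First I would make this reparameterisation explicit and derive the standard scalar form, which for binary pairwise models reduces the outer update to $\lambda_{i,n} = \operatorname{atanh}\!\bigl(\tanh(J_{i,n})\tanh(\eta_{i\setminus n})\bigr)$, where $\eta_{i\setminus n}$ is a function of the incoming $\lambda_{k,i}$. Next I would bound the Jacobian of this update: a direct calculation gives $\bigl|\partial\lambda_{i,n}/\partial\lambda_{k,i}\bigr|\le \tanh|J_{i,n}|$, and the key inequality $|\tanh x-\tanh y|\le|x-y|$ lets one pass from the derivative bound to a Lipschitz bound on the whole update operator in, say, the supremum norm. Summing over the (at most) $N-1$ neighbours of any node and using the uniform bound $|J_{i,n}|\le|d_{\max}^\beta|$ yields the operator-norm estimate $(N-1)\tanh|d_{\max}^\beta|$ on the global message-update map.

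Once this contraction constant is established, invoking the Banach fixed-point theorem on the compact space of bounded log-message vectors gives the conclusion: under the strict inequality $(N-1)\tanh|d_{\max}^\beta|<1$, the update is a strict contraction, so a unique fixed point exists and the iterates converge to it geometrically from any initialisation. Translating back through $b_{t,n}(\cdot)$ in \eqref{Belief} shows that the resulting beliefs are also unique and independent of the initial messages.

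The main obstacle I anticipate is in the second step: carefully deriving a clean Lipschitz bound for the specific local function \eqref{local} and compatibility function \eqref{compa} used here, rather than quoting a generic binary-MRF result. In particular, the factor \eqref{local} depends on the (stochastic) observation $r_{t,n}$, which affects the single-node potentials but not the pairwise couplings; I expect one has to verify that this randomness enters only through $\phi_n$ and thus does not show up in the contraction constant, so that the bound $(N-1)\tanh|d_{\max}^\beta|$ controls the iteration uniformly in $\{r_{t,n}\}$. After that, the Banach step and the translation back to beliefs are routine.
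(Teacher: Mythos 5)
Your proposal follows essentially the same route as the paper's proof: both reduce the binary-variable BP update to the scalar $\tanh$ form (the paper quotes Mooij--Kappen's parameterisation, you re-derive it via log-likelihood ratios), bound the Jacobian entries by $\tanh|J_{i,n}|$, sum over the at most $N-1$ neighbours to obtain the operator-norm estimate $(N-1)\tanh|d_{\max}^{\beta}|$, and invoke the contraction-mapping theorem; your observation that the stochastic local potential $\phi_n$ only shifts the local field and does not enter the contraction constant matches how the paper handles it. The approach is correct and materially identical to the paper's argument.
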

\begin{proof}
Please see Appendix B.
$\hfill\blacksquare$
\end{proof}

\com{Based on \textbf{Theorem}~\ref{Theorem03}, we next give a finite-time analysis of the regret upper bound for the BP-UCB algorithm.
In the BP-UCB algorithm, we can ignore the convergence time of the message passing in the BP algorithm because the duration of the FL communication round is much longer than the BP convergence time.
Moreover, compared with the Quick-Init UCB algorithm, whose arm set is $\mathcal{A}$, the arm set in the BP-UCB algorithm is $\mathcal{N}$.
Similarly, we define the set of bad super arms by
\begin{equation}\label{bad_set}
\mathcal{K}_{\mathrm{B}}=\left\{\mathcal{K} \mid \mathbb{E}\left[{\sum\limits_{n = 1}^{K} {{r_{n}}\left( {{\boldsymbol{\omega} }\left( {{\mathcal{K}}} \right)} \right)} } \right] < \operatorname{opt}_{\boldsymbol{r}}\right\},
\end{equation}
where $\operatorname{opt}_{\boldsymbol{r}}=\mathbb{E}\left[r_n\left(\boldsymbol{\omega} \left(\mathcal{K}^{\ast}\right) \right)\right]$ is the optimal expected reward.
Besides, we denote the expected reward of arm $n$ as $\hat{r}(\mathcal{K}_n) = \mathbb{E}\left[{\sum_{n = 1}^{K} {{r_{t, n}}\left( {{\boldsymbol{\omega}}\left( {{\mathcal{K}_n}} \right)} \right)} } \right]$.
Then, we obtain the reward gaps between client $i$ and the best arm and the worst arm, respectively, i.e.,
\begin{equation}\label{best}
R_{\min }^{i}=\operatorname{opt}_{\boldsymbol{r}}-\max \left\{\hat{r}(\mathcal{K}_n) \mid \mathcal{K}_n \in \mathcal{K}_{\mathrm{B}}, i \in \mathcal{K}_n\right\},
\end{equation}
\begin{equation} \label{worst}
R_{\max}^{i}=\operatorname{opt}_{\boldsymbol{r}}-\min\left\{\hat{r}(\mathcal{K}_n) \mid \mathcal{K}_n \in \mathcal{K}_{\mathrm{B}}, i \in \mathcal{K}_n\right\}.
\end{equation}
Meanwhile, the best and worst regrets among $N$ clients in~\eqref{best} and~\eqref{worst} are given by $R_{\min }=\min _{i \in\mathcal{N}} R_{\min }^{i}$ and $R_{\max }=\max _{i \in\mathcal{N}} R_{\max }^{i}$, respectively.}

Then, we have the following theorem to upper bound the BP-UCB algorithm.
\begin{theorem}\label{Theorem02}
	\textit{Given a client selection problem with $N$ clients under the FL\&DA framework, when the communication budget is $K$,
		then the cumulative regret upper bound of~\eqref{reg} by running the BP-UCB algorithm over time horizon $T$ is
		\begin{equation}
			\begin{split}
				{\mathop{\rm Reg}\nolimits}\left( T \right) \le \frac{{8N{R_{\max }}\ln \left( T \right)}}{{{R_{\min }}^2}} + \left( {\frac{{{\pi ^2}}}{3} + 1} \right)NR_{\max }.
			\end{split}
	\end{equation}}
\end{theorem}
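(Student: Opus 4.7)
The plan is to adapt the classical finite-time UCB regret analysis of Auer \emph{et al.} to the combinatorial top-$K$ selection rule used by Algorithm \ref{OUS_ALG}, treating the belief augmentation in \eqref{ucb1} as a bounded perturbation whose existence and uniqueness are guaranteed by Theorem~\ref{Theorem03}. The first thing I would establish is that the term $\tfrac{1}{N}b_{t,n}(\mathbb{I}^t_n)$ is uniformly bounded: since $b_{t,n}$ is a normalized probability in $[0,1]$ by the factor $\kappa_n$ in \eqref{Belief}, the whole additive correction is at most $1/N$. Moreover, Theorem~\ref{Theorem03} ensures the message-passing iteration in \eqref{message} converges to a unique fixed point independent of initialization, so at every time slot the index $\Psi^{\mathrm{bpucb}}_{t,n}$ is well-defined and the gossip-based ranking is consistent across clients.

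Next, I would decompose the regret via \eqref{reg} using the bad super-arm set \eqref{bad_set}: any super-arm $\mathcal{K}_n\in\mathcal{K}_{\mathrm{B}}$ contributes at most $R_{\max}^{i}\le R_{\max}$ per activation, so
\begin{equation*}
\mathrm{Reg}(T)\le R_{\max}\sum_{i=1}^{N}\mathbb{E}[\Pi_{T,i}\mathbbm{1}\{i \text{ chosen from a bad super-arm}\}].
\end{equation*}
For each client $i$ appearing in a bad super-arm, I would bound $\mathbb{E}[\Pi_{T,i}]$ by following the Auer-Cesa-Bianchi-Fischer argument: a suboptimal client is pulled at slot $t$ only when a confidence-bound failure occurs, i.e., either $\bar r_{t,i}$ over-estimates the true mean or the optimal client's empirical mean under-estimates it. A Hoeffding union bound over all $t\le T$ and all counts $s\le t$ gives the standard $8\ln T/(R_{\min}^i)^2$ leading term plus a $1+\pi^2/3$ correction from summing $\sum_{t\ge1}2t^{-2}$. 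The $1/N$-bounded belief term is absorbed into the confidence radius $\mu\sqrt{\ln t/\Pi_{t,n}}$, since for $\Pi_{t,n}$ on the critical scale $\ln t/(R_{\min}^i)^2$ the confidence radius is of order $R_{\min}^i\ge R_{\min}$, which dominates $1/N$ for nontrivial gaps. Summing over $i=1,\dots,N$ and using $R_{\min}^i\ge R_{\min}$, $R_{\max}^i\le R_{\max}$ yields the claimed bound.

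The main obstacle is that Algorithm~\ref{OUS_ALG} does not pick a single arm of maximal index but rather the top $K$ indices, so the event ``a suboptimal super-arm $\mathcal{K}_n$ is selected'' cannot be reduced to a single-arm confidence failure in the textbook way. I would handle this by the standard combinatorial-bandit trick: whenever the chosen super-arm is suboptimal, at least one of its constituent clients must have a UCB index no smaller than that of some client in $\mathcal{K}^\ast$, so the bad event decomposes into a union of per-client concentration failures; the combinatorial overhead then appears only through the outer sum over $N$ clients rather than over $\binom{N}{K}$ super-arms, which is precisely why the leading factor in the theorem is $N$ rather than $|\mathcal{A}|$. A secondary technical point, namely that the belief perturbation is a (deterministic) function of past rewards rather than an independent quantity, is resolved because the Hoeffding bound on $\bar r_{t,i}$ does not depend on the index functional form as long as the perturbation is uniformly bounded, which we have already verified.
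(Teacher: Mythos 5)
Your proposal is correct and follows essentially the same route as the paper's Appendix C: reduce the count of a bad super-arm to the count of a constituent client (the combinatorial-bandit step the paper imports from Chen \emph{et al.}), apply the Auer--Cesa-Bianchi--Fischer Chernoff--Hoeffding argument per client to get the $8\ln T/R_{\min}^2+1+\pi^2/3$ bound, absorb the bounded belief term into the confidence radius, and sum over the $N$ clients with $R_{\max}$ as the per-activation loss. Your treatment of the top-$K$ selection and of the belief perturbation is in fact somewhat more explicit than the paper's, but it is the same argument.
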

\begin{proof}
Please see Appendix C.
$\hfill\blacksquare$
\end{proof}

\begin{remark}
Theorem~\ref{Theorem02} shows that client selections in decentralized FA can achieve asymptotically diminishing regret under the condition of BP convergence shown in Theorem~\ref{Theorem03}.
Therefore, the proposed algorithm converges when the total time $T$ is sufficiently large.
\end{remark}

\section{Numerical Results} \label{Simulation Results}
In this section, several simulations are conducted to evaluate the proposed algorithms.
We consider a network scenario in a (1$\times$1) km$^2$ square area, where $N=20$ clients are randomly distributed within the coverage of the central server, as shown in Fig. \ref{network}.
The clients are self-organized in the FL\&DA framework, where each client links to its neighbor for message passing in the BP algorithm and the gossip method.
\com{The FL\&FA framework shares the same network with the  FL\&DA framework.
A slight difference between the FL\&FA and FL\&DA frameworks in Fig. \ref{network}  is that the former directly uploads their opinions to the central server, while the latter exchanges the opinions among the clients.}
At each time slot, only $K=5$ clients are selected to train the global model.
The channel vector $\boldsymbol{h}_{i}$ is modeled by the large-scale fading with the path-loss model: $\mathrm{PL[dB]\!=\!128.1+37.6}{\log _{10}}\left( d \right)$ \cite{xia2020multi}, where ${d}$ represents the Euclidean distance in km.
The transmit power ${P}_{0}$ is set to $\mathrm{20}$ dBm, and the average transmit SNR is $\mathrm{20}$ dB.
We assume that different clients have different data sizes.
To simulate the data quality heterogeneity, we divide the $20$ clients into two parts, i.e.,
clients $\{1,2,3,4,5 \}$ are with the i.i.d. data; while others are with non-i.i.d. data.
\begin{figure}[!t]
	\centering
	\includegraphics [width=2.8in]{./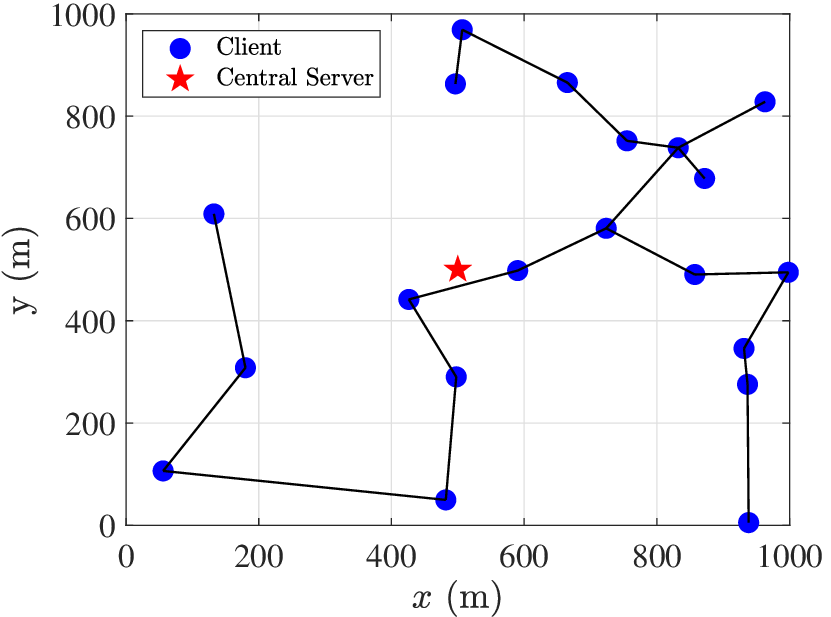}
	\caption{The network topology of a communication scenario in a (1$\times$1) km$^2$ square area, where $N=20$.}
	\label{network}
\end{figure}

We assume that the $N$ clients cooperatively train a classification model to classify the MNIST dataset using the support vector machine (SVM) method.
In the FL process, the total communication round $L = 15$ and the local epoch size $E=10$.
The batch size and the step size in the SGD method are $B=100$ and $\nu= 10^{-7}$, respectively.
\com{In the FA or DA process, we use the test data set to evaluate the trained global model.}
We consider the stochastically identical channel setting, where the successful transmission probability is set to $\theta_n = 1, \ \forall n \in \mathcal{N}$.
In addition, the learning rates in the Quick-Init UCB and BP-UCB algorithms are $\mu = 1$ and $\mu = 0.01$, respectively.
The BP algorithm sets the large-scale shadowing factors $C$ and $\beta$ to $-30$ dB and $3.7$, respectively.
In the following, all numerical results are obtained from 100 Monte Carlo trials.
\begin{figure*}[!t]
\centering
\subfloat{\includegraphics[width=2.2in]{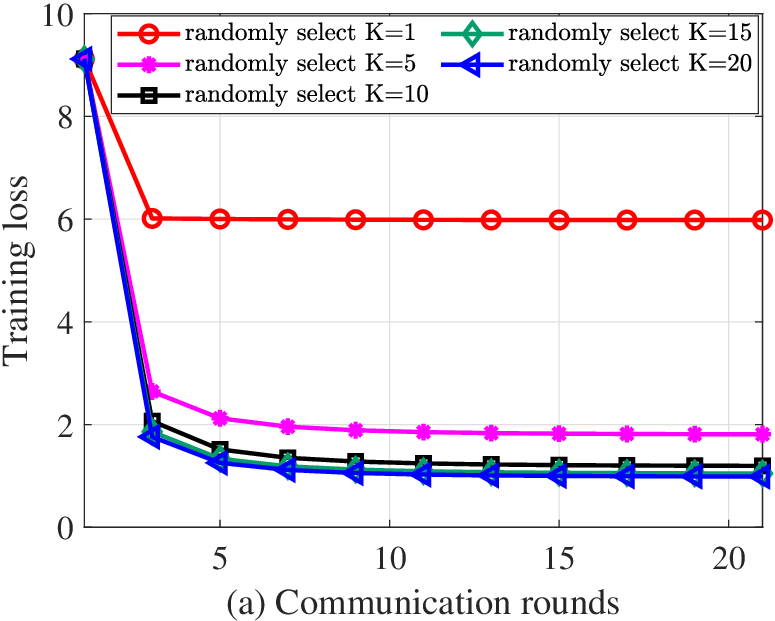}%
\label{Lgradual_case1}}
\hfil
\subfloat{\includegraphics[width=2.2in]{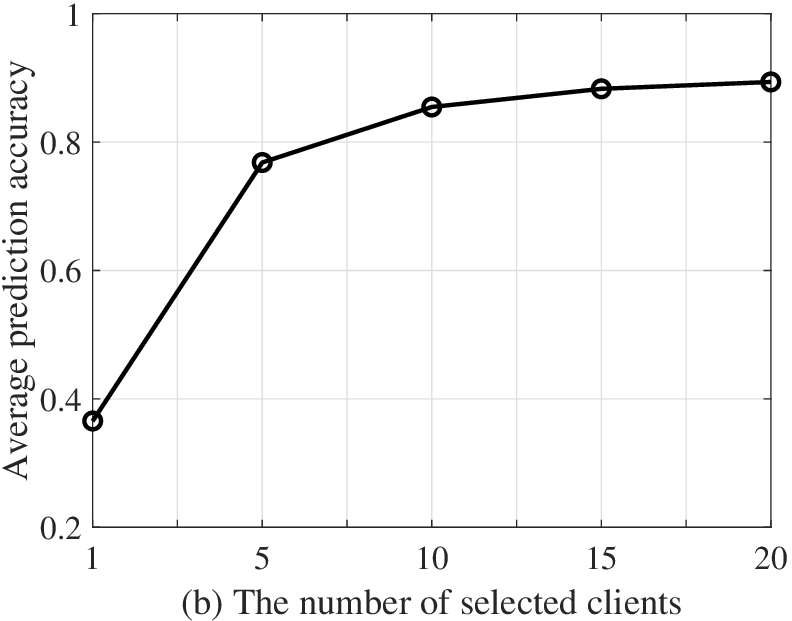}
\label{Lsteep_case2}}
\hfil
\subfloat{\includegraphics[width=2.2in]{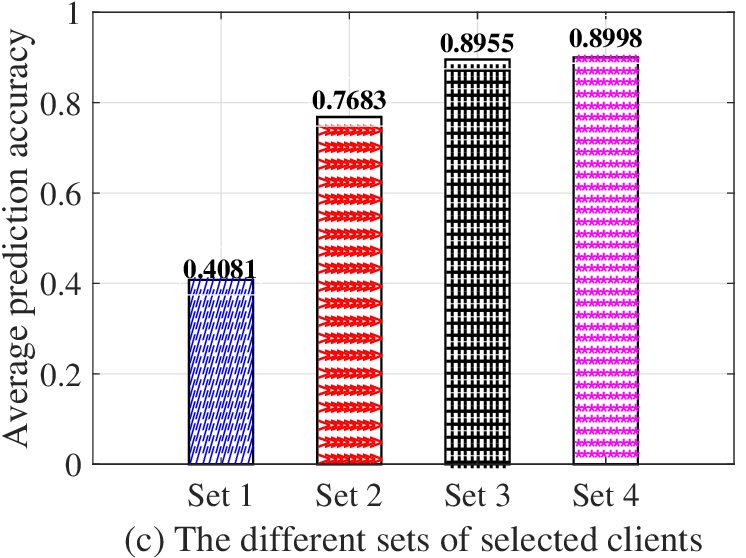}
\label{Llossy_case3}}
\caption{(a) The training loss of different numbers of selected clients; (b) The average prediction accuracy of the different numbers of selected clients; (c) The average prediction accuracy of different sets of selected clients.}
\label{1_2}
\end{figure*}

\subsection{Under the FL\&FA Framework}
We first investigate the impacts of different client sets and the number of participant clients on the trained global model by using the random selection method.
The random selection method uniformly selects $K$ out of $N$ clients at each time slot.
Fig.~\ref{1_2}(a)  shows the training loss in the FL process with the different number of participant clients.
We see that all client groups can converge when the communication round is over $15$.
The more participant clients in the FL process, the better performance of the trained global model can be achieved.
The same result can be observed in Fig.~\ref{1_2}(b), which shows the time-to-accuracy (i.e., the average prediction accuracy) of the trained global model with the different participant clients.
We see that the performance is not significantly improved when the number of participant clients is over $15$.
Therefore, it is necessary to strike a balance between the accuracy of the trained global model and the communication budget of $K$.
Fig.~\ref{1_2}(c) shows the time-to-accuracy of different combinations of client sets: set 1 $\rightarrow$ $\{16, 17, 18, 19, 20\}$,
set 3 $\rightarrow$ $\{1, 2, \ldots, 20\}$, set 4 $\rightarrow$ $\{1, 2, 3, 4, 5\}$, and set 2 consists of 5 random selected clients.
We see that client set 4 exhibits the best performance and is approximate to all selected cases.
By contrast, client set 1 accounts for the worst performance, even worse than the random selection method.
To sum up, Fig.~\ref{1_2} demonstrates that it is necessary to perform the client selection in the proposed framework, and the client set 4 can be regarded as the optimal solution of problem \eqref{target}.
\begin{figure}[!t]
	\centering
	\includegraphics [width=2.8in]{./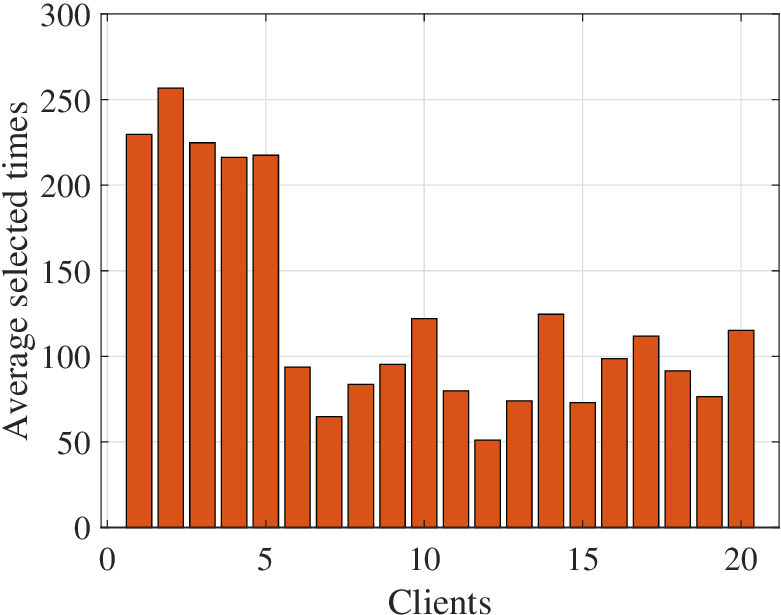}
	\caption{The average selected times of the participant clients by running the Quick-Init UCB algorithm where $N=20$ and $K=5$.}
	\label{cen_selected_time}
\end{figure}

In the FL\&FA framework, the client's local model parameters and test opinion are required to upload to the central server.
Fig.~\ref{cen_selected_time} shows \com{the average selected time of}different clients by running the Quick-Init UCB algorithm.
We see that the client set $\{1, 2, 3, 4,5\}$ has the highest selection frequency, leading to a better performance of the trained global model.
The average selected times of the client set $\{1, 2, 3, 4,5\}$ is $228.95$, which is more than $153\%$ compared with $90.34$ in the non-i.i.d. client group.
This indicates that the Quick-Init UCB algorithm can successfully find the optimal client set.

\begin{figure}[!t]
	\centering
	\includegraphics [width=2.8in]{./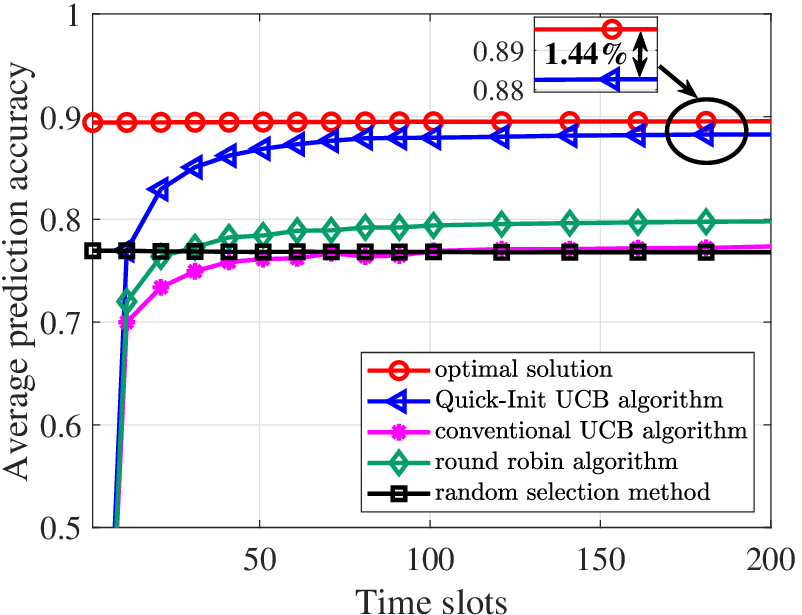}
	\caption{The average prediction accuracy of different selection methods via time slot $t$ under the FL\&FA framework, where $N=20$ and $K=5$.}
	\label{cen_performance}
\end{figure}
Fig.~\ref{cen_performance} compares the time-to-accuracy of the optimal solution, Quick-Init UCB algorithm, conventional UCB algorithm, round-robin algorithm, and the random selection method under the centralized FL$\&$FA framework.
The conventional UCB algorithm is similar to the Quick-Init UCB algorithm except for lacking the quick initiation phase.
The round-robin algorithm selects the predefined client sets sequentially.
Here, we define five client sets for the round-robin algorithm, i.e., the client set $\{1, 2, 11, 16, 20\}$, and clients in the other four sets are chosen randomly.
From Fig.~\ref{cen_performance}, we see that the prediction accuracy of the Quick-Init UCB algorithm increases from $76.62\%$ to $86.57\%$ compared with the random selection method and is about  $1.44\%$ less than the optimal solution.
Besides, the conventional UCB algorithm is much worse than the Quick-Init UCB algorithm, which performs similarly to the random selection method.
The reason is that the initiation duration (i.e., about $15,504$ time slots) for the conventional UCB algorithm is much longer than the time horizon $T=200$.
This demonstrates that the Quick-Init UCB algorithm can efficiently initiate all the arms in a few time slots and find the optimal solution quickly.

\subsection{Under the FL\&DA Framework}
\begin{figure}[!t]
	\centering
	\includegraphics [width=2.8in]{./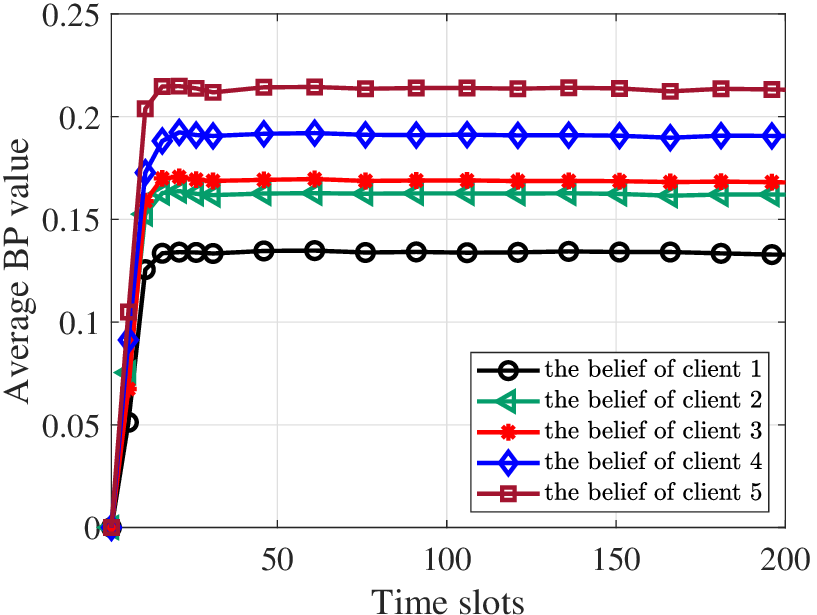}
	\caption{The average BP values of first $K$ clients via time slot $t$ by running the BP-UCB algorithm, where $N=20$ and $K=5$.}
	\label{BP_value}
\end{figure}
Next, we study the convergence of the BP algorithm and evaluate the BP-UCB algorithm under the FL\&DA framework.
Fig.~\ref{BP_value} shows the average BP value of client $\{1,2,3,4,5\}$ by running the BP-UCB algorithm.
We see that all five clients' belief values calculated by \eqref{Belief}  converge fast, which is about $10$ time slots.
Besides, different clients have different beliefs, reflecting the system and data heterogeneity of the client.
In fact, the better the client's belief value, the higher the probability it will be selected.
This is also why we incorporate the belief value into constructing the UCB index.

\begin{figure}[!t]
	\centering
	\includegraphics [width=2.8in]{./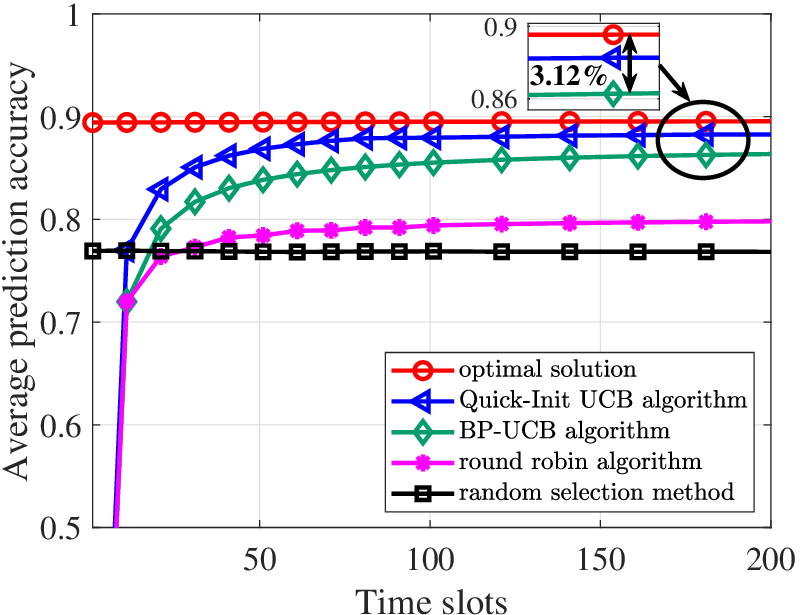}
	\caption{The average prediction accuracy of different selection methods via time slot $t$ under the FL\&DA framework, where $N=20$ and $K=5$.}
	\label{decen_performance}
\end{figure}
Fig.~\ref{decen_performance} compares the performance of the optimal solution,  BP-UCB algorithm,  Quick-Init UCB algorithm,  round-robin algorithm, and the random selection method under the FL$\&$DA framework.
The exploration factor $\mu$ in the BP-UCB algorithm equals $0.01$.
It can be seen from Fig.~\ref{decen_performance} that the prediction accuracy of the BP-UCB algorithm is improved from $72.02\%$ to $86.28\%$ compared with the random selection method.
In addition, the performance gap between the optimal solution and the BP-UCB algorithm is less than $3.12\%$.
More importantly, the performance of the BP-UCB algorithm is slightly worse than the Quick-Init UCB algorithm.
However, the BP-UCB algorithm does not require uploading the client's opinion to the central server.

\begin {table*}[!t]\com{
\centering \caption{Time-to-accuracy of different client selection algorithms under the FL\&FA and FL\&DA frameworks, where $T=500$.}
\begin {tabular}{c| c |c| c| c | c |c | c |c}
\toprule	
Algorithms:
 &\multicolumn{2}{c|}{\makecell[c]{{Optimal}\\ Solution}} & \multicolumn{2}{c|}{\makecell[c]{Random\\Selection}} &\multicolumn{2}{c|}{\makecell[c]{Quick-Init UCB\\Algorithm}} & \multicolumn{2}{c}{\makecell[c]{BP-UCB\\ Algorithm}} \\ \midrule
target accuracy & time slots & test accuracy &time slots &test accuracy &time slots &test accuracy &time slots &test accuracy \\ \midrule
0.75 & 6.00 & 0.7683 & 40.24 & 0.7519 & 9.92 & 0.7550 & 14.09 & 0.7541 \\ \midrule
0.78 & 7.00 & 0.7843 & 391.24 & 0.7752 & 13.55 & 0.7832 & 23.63 & 0.7824 \\ \midrule
0.82 & 11.00 & 0.8217 & 500.00 & $-$ & 26.42 & 0.8209 & 57.95&0.8196 \\ \midrule
0.85 & 19.00 & 0.8515 & 500.00 & $-$ & 111.22 & 0.8494 &131.81&0.8466 \\
\bottomrule
\end{tabular}\label{comparison_table}}
\end {table*}					
Next, we compare the time-to-accuracy of different selection algorithms under the FL\&FA and FL\&DA frameworks, as shown in Table~\ref{comparison_table}, where the total time slots are $T=500$, and the target accuracy is set to $\{0.75, 0.78, 0.82, 0.85\}$.
Symbol ``$-$'' represents that the selection algorithm cannot achieve the target accuracy within $500$ time slots.
It can be seen that the optimal solution consumes the least time slots compared with other methods to achieve the target accuracy.
In contrast, the random selection method accounts for the worst performance, which cannot achieve the target accuracy in the cases of $0.82$ and $0.85$.
In addition, the time-to-accuracy of the BP-UCB algorithm is slightly worse than the Quick-Init UCB algorithm.
Both can achieve the target accuracy and are close to the optimal solution.
These results are consistent with those shown in Fig \ref{decen_performance}.

\begin{figure}[!t]
	\centering
	\includegraphics [width=2.8in]{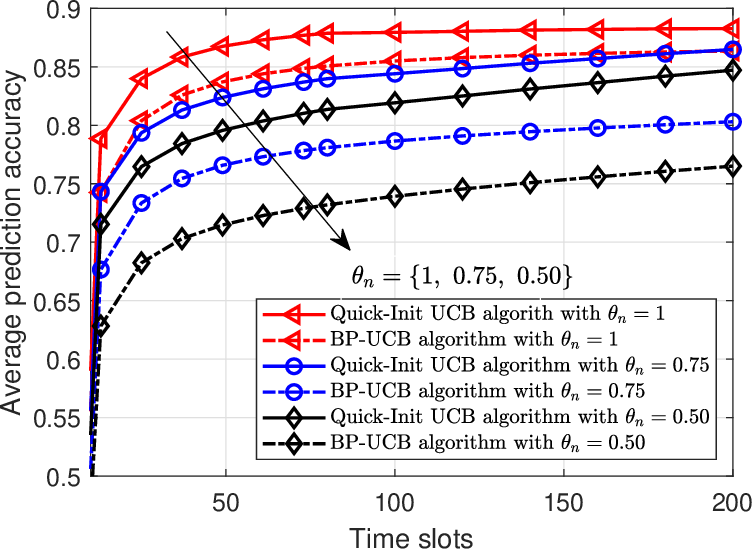}
	\caption{\com{The average prediction accuracy of the Quick-Init UCB algorithm and the BP-UCB algorithm via the successful transmission probability $\theta_n=\{ 1, 0.75, 0.50\}$, where $N=20$ and $K=5$.}}
	\label{PreTheta}
\end{figure}
\com{Fig. \ref{PreTheta} depicts the average prediction accuracy of two algorithms, Quick-Init UCB and BP-UCB, across different successful transmission probabilities $\theta_n=\{ 1, 0.75, 0.5\}, \ \forall n \in \mathcal{N}$, where $N=20$, $K=5$, and $T=200$.
We see that the Quick-Init UCB algorithm consistently outperforms the BP-UCB algorithm across various successful transmission probabilities. This superiority can be attributed to the Quick-Init UCB algorithm's ability to leverage global information of the trained model, made possible by the centralized FL\&FA framework.
Moreover, as the successful transmission probability $\theta_n$ decreases from $1$ to $0.5$, both proposed algorithms experience a decline in average prediction accuracy. This decline can be attributed to the impact of the successful transmission probability on the total volume of data involved in the federated training process. Consequently, the reduction in average prediction accuracy is more pronounced for the BP-UCB algorithm, which requires more communication rounds to converge than the BP-UCB algorithm.}

\begin{figure}[!t]
	\centering
	\includegraphics [width=2.8in]{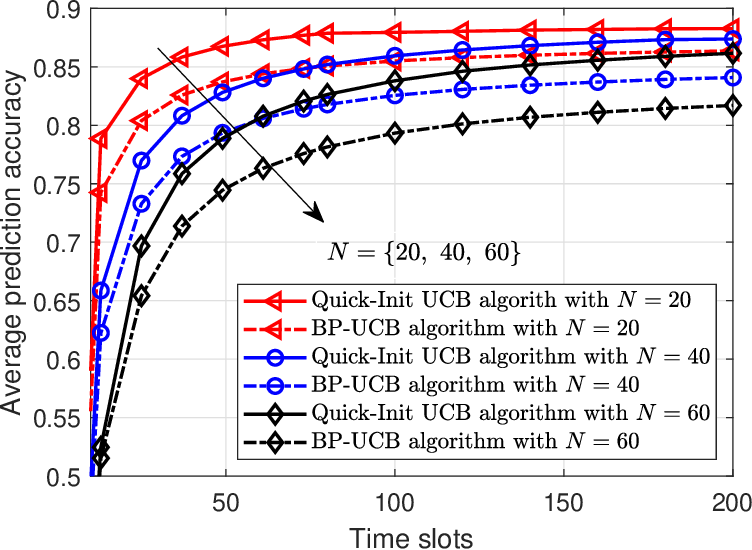}
	\caption{\com{The average prediction accuracy of the Quick-Init UCB algorithm and the BP-UCB algorithm via the number of clients $N=\{ 20, 40, 60\}$, where $\theta_n = 1$ and $K=5$.}}
	\label{PreNum}
\end{figure}
\com{Fig. \ref{PreNum} compares the average prediction accuracy of the Quick-Init UCB algorithm and the BP-UCB algorithm via the number of users $N=\{ 20, 40, 60\}$, where $\theta_n = 1$, $K=5$ and $T=200$.
We see that the Quick-Init UCB algorithm's performance is better than the BP-UCB algorithm under different numbers of clients $N=\{ 20, 40, 60\}$.
In addition, the average prediction accuracy of the Quick-Init UCB and BP-UCB algorithms decreases with the number of clients.
This can be attributed to the increasing heterogeneity of user data and the growing complexity of the system, which expands the algorithm's exploration space in searching of the optimal set of $K=5$ clients. Consequently, this expansion leads to a reduction in the average prediction accuracy of the proposed algorithms.}

\begin{figure}[!t]
	\centering
	\includegraphics [width=2.8in]{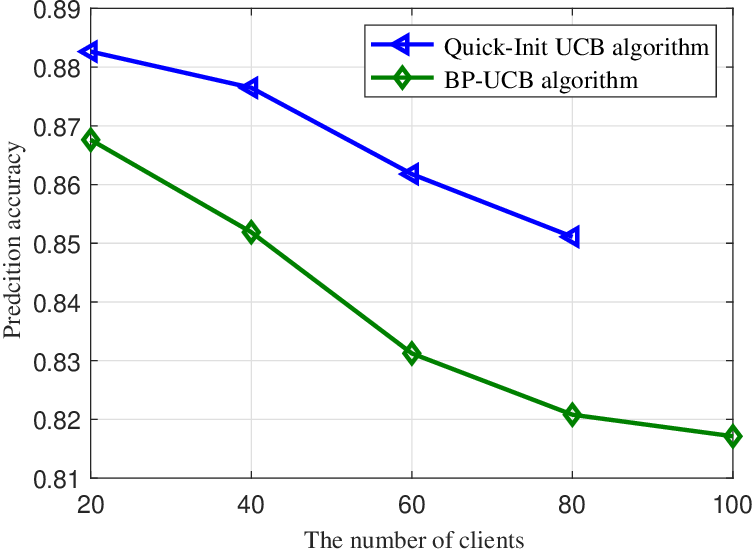}
	\caption{\com{The prediction accuracy of the Quick-Init UCB algorithm and the BP-UCB algorithm via the number of clients $N=\{ 20, 40, 60, 80, 100\}$, where $\theta_n=1$, $K=5$, and $T=500$.}}
	\label{Pre2Slot}
\end{figure}
\com{Fig.~\ref{Pre2Slot} shows the prediction accuracy of the Quick-Init UCB and  BP-UCB algorithms under the FL\&FA and FL\&DA frameworks across different numbers of clients $N=\{20, 40, 60, 80, 100\}$, where $\theta_n = 1$, $K=5$, and $T=500$.
We see that the performance of the Quick-Init UCB is better than the BP-UCB algorithm.
However, it is worth noting that the Quick-Init UCB algorithm suffers from high computational complexity due to the increasing exploration space with the number of clients. For example, when $N=100$, the exploration space becomes $|\mathcal{A}| = \binom{100}{5}>10^7$, which exceeds the computer memory capacity, making it impossible to plot the results of the Quick-Init UCB algorithm.
While the Quick-Init UCB algorithm demonstrates superior prediction performance, its high computational complexity limits its practical applicability. On the other hand, the BP-UCB algorithm exhibits better scalability but poorer prediction performance. Therefore, Figs. \ref{PreNum} and \ref{Pre2Slot} highlight the tradeoff between prediction accuracy and computational complexity when implementing the proposed algorithms in practical scenarios.}

\com{
\subsection{Comparative Analysis}
At last, we provide a detailed comparative analysis to assess the scalability and adaptability of the proposed algorithms and the baseline algorithms based on the above simulations.
At each time slot, the round-robin algorithm sequentially selects predefined client sets, while the random selection algorithm randomly chooses a set from the predefined client sets. Thus, the round-robin and random selection algorithms exhibit superior scalability and adaptability as they do not rely on any information from the evaluation process.
However, we see from Figs. \ref{cen_performance} and  \ref{decen_performance} that they achieve the lowest prediction accuracy.

Both Quick-Init UCB and UCB algorithms need to compute the UCB indices using the evaluation results.
In contrast, the Quick-Init UCB algorithm needs to improve its computational complexity due to the exponential size of the client set space, thereby limiting its scalability. To address this issue, the BP-UCB algorithm adopts a distributed network structure where the clients are self-organized. This approach significantly reduces the feasible client sets from $\binom{N}{K}$ to $N$, where $N$ and $K$ represent the total number of clients and communication budgets, respectively. Nevertheless, the adaptability of the BP-UCB algorithm is inferior to that of the Quick-Init UCB algorithm. This is attributed to the requirement that clients are self-organized, which is not frequently observed in practical applications.
In summary, the BP-UCB algorithm demonstrates slightly lower performance than the Quick-Init UCB algorithm, as illustrated in Figs. \ref{PreTheta}-\ref{Pre2Slot}, it offers a better scalability.}

\section{Conclusion} \label{Conclusion}
This paper proposed the FL\&FA and FL\&DA frameworks to improve the efficacy of the trained global model.
Based on these frameworks, we studied the \textit{goal-directed} client selection problem in an FL-enabled communication network by considering the clients' system and data heterogeneity subject to limited communication resources.
By formulating it as an SMAB problem, we put forth the Quick-Init UCB algorithm and the BP-UCB algorithm to select the optimal client subset for model training under the FL\&FA and FL\&DA frameworks, respectively.
Numerical results show that the BP-UCB algorithm has a comparable performance with the Quick-Init UCB algorithm, even though the BP-UCB algorithm does not require uploading any information to the central server.
In addition, we derived two regret upper bounds (i.e., both with $\mathcal{O}(\mathrm{ln}(T))$) for the Quick-Init UCB algorithm and the BP-UCB algorithm, respectively, which both increase logarithmically with time slot $T$.
Moreover, we conducted extensive simulations to evaluate the Quick-Init UCB and BP-UCB algorithms.
Numerical results show that the proposed algorithms achieve nearly optimal performance, with a marginal gap of less than $1.44\%$ and $3.12\%$, respectively.

\appendices
\section{Proof of Theorem 1}\label{appendix1}
Compared with the conventional UCB algorithm, the Quick-Init UCB algorithm has an extra quick initialization phase.
This quick initialization phase can accelerate the convergence rate of the UCB algorithm.
However, it will not change the time-increase trend of the regret upper bound in the conventional UCB algorithm.
As a result, we can derive the regret upper bound of the Quick-Init UCB algorithm based on the finite-time regret analysis in  \cite{auer2002finite}.
Formally, we define the performance gap between the optimal expected reward and reward expectation for arm $a$ by
\begin{equation} \label{each0}
\delta_{a}=\operatorname{opt}_{\boldsymbol{r}}-\mathbb{E}\left[r_{a}(\mathcal{K}_a)\right], \forall a \in \mathcal{A}.
\end{equation}
Then, according to \cite{auer2002finite},  the regret upper bound of the Quick-Init UCB algorithm can be written as
\begin{equation}\label{app0_1}
{\mathop{\rm Reg}\nolimits}\left( T \right) \le
 \left[8 {\sum_{a:a\in \mathcal{K}_B}}\left(\frac{\ln T}{\delta_{a}}\right)\right]+\left(\frac{\pi^2}{3}+1\right)\left(\sum_{a=1}^{\left|\mathcal{A} \right|} \delta_{a}\right).
\end{equation}

Let
\begin{equation}\label{best0}
R_{\min }=\operatorname{opt}_{\boldsymbol{r}}-\max \left\{\hat{r}(\mathcal{K}_a) \mid \mathcal{K}_a \in \mathcal{K}_{\mathrm{B}}\right\},
\end{equation}
\begin{equation} \label{worst0}
R_{\max}=\operatorname{opt}_{\boldsymbol{r}}-\min\left\{\hat{r}(\mathcal{K}_a) \mid \mathcal{K}_a \in \mathcal{K}_{\mathrm{B}}\right\}.
\end{equation}
be the worst and best reward gaps between the best and worst arm compared with the optimal super arms, respectively.
By combining with \eqref{best0} and \eqref{worst0}, we have
\begin{equation}
\begin{aligned} \label{bound0}
{\mathop{\rm Reg}\nolimits}\left( T \right) \leq
& \left[8 {\sum_{a:a\in \mathcal{K}_B}}\left(\frac{\ln T}{\delta_{a}}\right)\right]+\left(\frac{\pi^2}{3}+1\right)\left(\sum_{a=1}^{\left|\mathcal{A} \right|} \delta_{a}\right)\\
\leq & \frac{{8{\left|\mathcal{A} \right|}\ln \left( T \right)}}{{{R_{\min }}}} + \left( {\frac{{{\pi ^2}}}{3} + 1} \right){\left|\mathcal{A} \right|}R_{\max }.\\
\end{aligned}
\end{equation}
This concludes the proof.
$\hfill\blacksquare$

\section{Proof of Theorem 2}\label{appendix3}
According to \cite{mooij2005sufficient}, the BP update-equation with binary variables can be written as
\begin{equation}
	\tanh \tilde{\nu}^{j \rightarrow i}=\tanh \left(J_{i j}\right) \tanh \left(\theta_{j}+\sum_{{k \in \mathrm{Neighbor}(j)/ i}} \nu^{k \rightarrow j}\right),
	\label{BPrewrite}
\end{equation}
where $\tilde{\nu}^{j \rightarrow i}$ is the iterative message from neighboring client $j$ to $i$ and $\nu^{k \rightarrow i}$ is the message from neighboring client $k$ to $i$.
Symbols $J_{i j}$ and $\theta_{j}$ are the coupling between client $i$ and $j$ and the local field in client $j$, respectively.
In fact, the parallel BP update can be viewed as a mapping function of clients $i$ and $j$, i.e., $\tilde{\nu}_i = f\left( \nu_j \right)$, which is specified in \eqref{BPrewrite}.
Specifically, the incoming messages $\nu$ to client $j$ are mapped to a message output $\tilde{\nu}$ to client $i$.
Therefore, we need to derive the conditions that the mapping function $f$ is contraction so that the message converges to a fixed point.

Without loss of generality, we adopt the theorem in \cite{mooij2005sufficient} to show the function contraction.
For continence, we copy it as follows:

\textit{Let $f$ : $X\to X$ be a contraction of a complete metric space $\left( {{\rm{X, d}}} \right)$.
Then $f$ has a unique fixed point ${x_\infty } \in X$ and for any $x \in X$, the sequence $\{x, f\left( x \right), {f^2}\left( x \right), \ldots\}$  obtained by iterating $f$ converges to $x_\infty$.
The rate of convergence is at least linear since $d(f(x), {x_\infty }) \le Kd(x, {x_\infty })$ for all $x \in X$.}

Based on this theorem, the finite-dimensional vector space $V$ can be constructed by the following lemma:

\textit{If ${\sup _{\nu \in V}}\left\| {{f'}\left( \nu \right)} \right\| < 1$, then $f$ is a $\left\|  \cdot  \right\|$-contraction and the consequence of the above theorem  holds}.

Then, we apply this lemma to obtain the derivation of $f$, which can be calculated by \eqref{BPrewrite}, i.e.,
\begin{equation}\label{derivation}
	\left(f^{\prime}(\nu)\right)_{j \rightarrow i, k \rightarrow l}=\frac{\partial \tilde{\nu}^{j \rightarrow i}}{\partial \nu^{k \rightarrow l}}=G_{j \rightarrow i, k \rightarrow l} A_{j \rightarrow i}(\nu),
\end{equation}
where
\begin{equation}\label{G}
	G_{j \rightarrow i}(\nu):=\frac{1-\tanh ^{2}\left(\theta_{j}+\sum_{k \in \partial j \backslash i} \nu^{k \rightarrow j}\right)}{1-\tanh ^{2}\left(\tilde{\nu}^{j \rightarrow i}(\nu)\right)} \operatorname{sgn} J_{i j}
\end{equation}
\begin{equation}\label{A}
	A_{j \rightarrow i, k \rightarrow l}:=\tanh \left|J_{i j}\right| \delta_{j, l} \mathbf{1}_{{\mathrm{Neighbor}(j)/ i}}(k),
\end{equation}
where all $\nu$-dependent variables are absorbed in \eqref{G}, and \eqref{A} is dependent on the coupling between neighbor $i$ and $j$ and the pairwise interactions.
Although term $A_{j \rightarrow i, k \rightarrow l}$ is defined by the pair clients $i$ and $j$ as well as pair clients $k$ and $l$ in the BP network, only the linked client $l$ with $j$ affects the message passing from client $j$ to client $i$.
Note that ${\sup _{\nu \in V}}\left\| {G_{j \rightarrow i}(\nu)} \right\| = 1$, which indicates that
\begin{equation}\label{derivation_upper}
	\left|\frac{\partial \tilde{\nu}^{j \rightarrow i}}{\partial \nu^{k \rightarrow l}}\right| \leq A_{j \rightarrow i, k \rightarrow l}
\end{equation}
is satisfied on $V$.
Therefore, the upper bound of the derivation of $f$  can be computed by
\begin{equation}
	\begin{aligned}
		\left\|f^{\prime}(\nu)\right\|_{1} & \leq \max _{k \rightarrow l} \sum_{j \rightarrow i} \tanh \left|J_{i j}\right| \delta_{j l} \mathbf{1}_{{ \mathrm{Neighbor}(j)/ i}}(k) \\
		&=\max _{l \in V} \max _{{k \in \mathrm{Neighbor}(l)}} \sum_{{i \in \mathrm{Neighbor}(l)/ k}} \tanh \left|J_{i l}\right| .
	\end{aligned}
\end{equation}
Based on the above lemma, we obtain
\begin{corollary} 	\label{cor-1}
	\textit{If \begin{equation}
			\max _{l \in V} \max _{k \in \mathrm{Neighbor}(l)} \sum_{{i \in \mathrm{Neighbor}(l)/ k}} \tanh \left|J_{i l}\right|<1,
		\end{equation}
		BP is the contraction mapping function and converges to a unique value, irrespective of the initial messages.}
\end{corollary}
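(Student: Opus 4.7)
The corollary is a direct packaging of the two ingredients that the proof of Theorem~\ref{Theorem03} has already assembled: the contraction-criterion lemma (if $\sup_{\nu\in V}\|f'(\nu)\|<1$, then $f$ is a $\|\cdot\|$-contraction) together with the explicit operator $1$-norm bound on the BP Jacobian obtained via~\eqref{derivation}--\eqref{derivation_upper}. The plan is therefore to chain these pieces through the Banach fixed-point theorem stated just above.

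First I would observe that the computation leading to~\eqref{derivation_upper} yielded the uniform (in $\nu$) estimate
\[
\|f'(\nu)\|_1 \;\le\; \max_{l\in V}\;\max_{k\in\mathrm{Neighbor}(l)}\;\sum_{i\in\mathrm{Neighbor}(l)/k}\tanh|J_{il}|,
\]
and the crucial feature is that the right-hand side does not depend on $\nu$. Taking the supremum over $\nu\in V$ therefore preserves the inequality, so the hypothesis of the corollary forces $\sup_{\nu\in V}\|f'(\nu)\|_1<1$. At this point the contraction-criterion lemma fires and tells us that the parallel BP update $f$ is a $\|\cdot\|_1$-contraction on the finite-dimensional (hence complete) message space $V$.

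From here the Banach fixed-point theorem quoted immediately before the lemma closes the argument: $f$ admits a unique fixed point $\nu_\infty\in V$, and for every initial message $\nu\in V$ the iterates $\{f^n(\nu)\}$ converge to $\nu_\infty$ at a geometric rate governed by the contraction constant on the right-hand side of the displayed bound. Translating this back to the BP algorithm gives exactly the stated conclusion: BP converges to a unique message configuration, irrespective of initialization.

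I do not expect any real obstacle, since the proof is essentially a short chain of invocations. The only step that deserves a quick sanity check is the passage from~\eqref{derivation} to~\eqref{derivation_upper} via $\sup_{\nu\in V}\|G_{j\to i}(\nu)\|=1$, together with the rearrangement of the double index into $\max_l\max_k\sum_i$. If that supremum on $G$ is attained only in a limiting sense, one simply notes that the resulting estimate is still a uniform bound on $\|f'(\nu)\|_1$, which is all the contraction criterion needs.
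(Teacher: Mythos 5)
Your proposal is correct and follows essentially the same route as the paper: the paper likewise observes that the $1$-norm bound on the BP Jacobian from \eqref{derivation}--\eqref{derivation_upper} is uniform in $\nu$, invokes the quoted contraction-criterion lemma to conclude $f$ is a $\left\| \cdot \right\|_1$-contraction, and then applies the stated fixed-point theorem to obtain a unique limit independent of initialization. Your sanity check on $\sup_{\nu}\left\|G_{j\rightarrow i}(\nu)\right\| = 1$ is also consistent with how the paper handles that step.
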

Consequently, the convergence of the BP algorithm depends on the summation of couplings under the neighboring constraints.

We rewrite the  compatibility function between clients $n$ and $i$ in the following
\begin{equation}
	{\psi _{i,n}}\left( {{\mathbb{I}^{t}_i},{\mathbb{I}^{t}_n}} \right) = \exp \left( { - Cd_{{i},{n}}^\beta } \right),
	\label{compa}
\end{equation}
where the coupling is computed by ${J_{i,l}} =  - Cd_{i,l}^\beta $, where $C$ and $\beta$ are two fading-relative constants and $d_{i,l}$ denotes the distance between clients $i$ and $l$.
Therefore, by applying \eqref{compa} to \textbf{Corollary}~\ref{cor-1}, we obtain
\begin{equation}
	\begin{aligned}
		&\max _{l \in V} \max _{k \in \mathrm{Neighbor}(l)} \sum_{{i \in \mathrm{Neighbor}(l)/ k}} \tanh \left|J_{i l}\right| \\
		&= \max _{l \in V} \max _{k \in \mathrm{Neighbor}(l)} \sum_{{i \in \mathrm{Neighbor}(l)/ k}} \tanh \left|d_{i l}^\beta \right|  \\
		&\le \left( {N - 1} \right)\tanh \left| {d_{\max }^\beta} \right|. \\
	\end{aligned}
\end{equation}
where $d_{\max }$ denotes the maximum distance between two arbitrary clients in the BP network.
Therefore, if $\left( {N - 1} \right)\tanh \left| {d_{\max }^\beta} \right| < 1$ strictly holds in the network, BP will converge, which concludes this proof.
$\hfill\blacksquare$

\section{Proof of Theorem 3}\label{appendix4}
The regret ${\mathop{\rm Reg}\nolimits}\left( T \right)$ after $T$ time slots can be written as
\begin{equation} \label{cenReg_1}
{\mathop{\rm Reg}\nolimits}\left( T \right) = \sum_{\mathcal{K} \in \mathcal{K}_{\mathrm{B}}} \Delta(\mathcal{K}) \mathbb{E}\left[\Pi_{T,\mathcal{K}}\right],
\end{equation}
where $\Delta(\mathcal{K}) = \operatorname{opt}_{\boldsymbol{r}}  -\hat{r}(\mathcal{K})$.
Since the number of times that client $k$ is selected in set $\mathcal{K}$ is less than the times that client $k$ is selected in other sets after $T$ time slots, we obtain that $\Pi_{T,\mathcal{K}} \le \Pi_{T,k}$ \cite{chen2013combinatorial}, where $k \in \mathcal{K}$.
Then, we can bound the regret in \eqref{cenReg_1} by
\begin{equation} \label{cenReg_2}
\begin{aligned}
{\mathop{\rm Reg}\nolimits}\left( T \right) &= \sum_{\mathcal{K} \in \mathcal{K}_{\mathrm{B}}} \Delta(\mathcal{K}) \mathbb{E}\left[\Pi_{T,\mathcal{K}}\right] \\
&\le NR_{\max } \mathbb{E}\left[\Pi_{T,i}\right], i \in \mathcal{N}.
\end{aligned}
\end{equation}
It can be seen that bounding \eqref{cenReg_2} is equal to bounding the average number of times that each client be selected, i.e., $\mathbb{E}\left[\Pi_{T,i}\right]$.

Let $c_{t,{\Pi_{t,i}}} = \mu \sqrt {{{\ln \left( t \right)}}/{{{\Pi_{t,i}}}}}$, for any client $i$.
In the following, we bound ${\pi_{t,i}}$ for the number of times that client $i$ is selected after $T$ time slots. In the initialization phase, each client $i$ is selected once. Therefore, we obtain the following upper bound \cite{auer2002finite},
\begin{equation}
\begin{aligned} \label{bound_number}
\Pi_{T,i} &=1+\sum_{t=t_0+1}^{T}\left\{{\mathbb{I}^{t}_i}\right\} \\
& \leq \ell+\sum_{t=t_0+1}^{T}\left\{{\mathbb{I}^{t}_i}, \Pi_{t-1,i} \geq \ell\right\} \\
& \leq \ell+\sum_{t=t_0+1}^{T}\left\{\bar{r}_{\Pi^{*}(t-1)}^{*}+c_{t-1, \Pi^{*}(t-1)} \leq \bar{r}_{i,\Pi_{t-1,i}}\right.\\
&\left.+c_{t-1, \Pi_{t-1,i}}, \Pi_{t-1,i} \geq \ell\right\} \\
\leq & \ell+\sum_{t=t_0+1}^{T}\left\{\min _{0<s<t} \bar{r}_{s}^{*}+c_{t-1, s} \leq \max _{\ell \leq s_{i}<t} \bar{r}_{i, s_{i}}+c_{t-1, s_{i}}\right\} \\
& \leq \ell+\sum_{t=1}^{\infty} \sum_{s=1}^{t-1} \sum_{s_{i}=\ell}^{t-1}\left\{\bar{r}_{s}^{*}+c_{t, s} \leq \bar{r}_{i,s_{i}}+c_{t, s_{i}}\right\}.
\end{aligned}
\end{equation}
The last term in \eqref{bound_number} indicates that $\bar{r}$  depends on the confidence interval, which can be constructed using the Chernoff-Hoeffding inequality.

Based on the above fact, we obtain $\mathbb{P}\left\{\bar{r}_{s}^{*} \leq r^{*}-c_{t, s}\right\} \leq e^{-2\mu^2 \ln t}=t^{-2\mu^2}$ and $\mathbb{P}\left\{\bar{r}_{i,s_{i}} \ge r_{i}^{*} + c_{t, s_{i}}\right\} \leq e^{-2\mu^2 \ln t}=t^{-2\mu^2}$, where $r_{i}^{*}$ denotes the expected value of the probability distribution in client $i$'s reward.
Let $\ell=\left\lceil(8 \ln T) / R_{i}^{2}\right\rceil$, we have
\begin{equation}
\begin{aligned} \label{bound}
\mathbb{E}\left[\pi_{T,i}\right] \leq & {\left[\frac{8 \ln T}{R_{i}^{2}}\right]+\sum_{t=1}^{\infty} \sum_{s=1}^{t-1} \sum_{s_{i}=\left\lceil(8 \ln n) / R_{i}^{2}\right\rceil}^{t-1} } \\
& \times\left(\mathbb{P}\left\{\bar{r}_{s}^{*} \leq r^{*}-c_{t, s}\right\} +\mathbb{P}\left\{\bar{r}_{i,s_{i}} \ge r_{i}^{*} + c_{t, s_{i}}\right\}\right) \\
\leq &\left\lceil\frac{8 \ln T}{R_{i}^{2}}\right\rceil+\sum_{t=1}^{\infty} \sum_{s=1}^{t} \sum_{s_{i}=1}^{t} 2t^{-2\mu^2}. \\
\end{aligned}
\end{equation}
When $\mu^{2} \geq 3 / 2$, we have $\sum_{t=1}^{\infty} t^{-2 \mu^{2}+1} \leq \sum_{t=1}^{\infty} t^{-2}=\pi^{2} / 6$.
Equation \eqref{bound} can be bounded as
\begin{equation}
\begin{aligned} \label{bound1}
\mathbb{E}\left[\pi_{T,i}\right] \leq &\left\lceil\frac{8 \ln T}{R_{i}^{2}}\right\rceil+\sum_{t=1}^{\infty} \sum_{s=1}^{t} \sum_{s_{i}=1}^{t} 2t^{-2\mu^2} \\
\le & \frac{8 \ln T}{R_{\min }^{2}} + 1 + \frac{\pi^{2}}{3}. \\
\end{aligned}
\end{equation}
Denote the convergence value of BP by ${b_{\infty ,i}}$ in client $i$ for UCB selection.
We obtain $\mathbb{P}\left\{\bar{r}_{s}^{*} \leq r^{*}-c_{t, s}-{b_{\infty ,i}}\right\} \leq 2 e^{-\frac{2(c+b)^{2} {\pi}^{2}}{\pi^{2}} \ln t} \leq e^{-2\mu^2 \ln t}=t^{-2\mu^2}$ and $\mathbb{P}\left\{\bar{r}_{i,s_{i}} \ge r_{i}^{*} + c_{t, s_{i}}\right\} \leq 2 e^{-\frac{2(c+b)^{2} {\pi}^{2}}{\pi^{2}} \ln t}\leq e^{-2\mu^2 \ln t}=t^{-2\mu^2}$ from Chernoff-Hoeffding inequality.
By summating over all agents, we conclude the proof.
$\hfill\blacksquare$

\balance
\bibliography{BibDIRP}
\bibliographystyle{IEEEtran}
\end{document}